\Crefname{assumption}{Assumption}{Assumptions}
\Crefname{condition}{Condition}{Conditions}
\newcommand{\norm}[1]{\left\lVert#1\right\rVert}
\newcommand{\pr}{\mathbb{P}}
\newcommand{\expect}{\mathbb{E}}
\newcommand{\ind}{\mathbb{I}}
\newcommand{\blockedit}{\color{magenta}}
\renewcommand\S{\mathcal S}
\newcommand\A{\mathcal A}
\newcommand\D{\mathcal D}
\newcommand\F{\mathcal F}
\newcommand{\RNum}[1]{\uppercase\expandafter{\romannumeral #1\relax}}
\newcommand\tr{^\intercal}
\theoremstyle{plain}
\newtheorem{theorem}{Theorem}
\newtheorem{lemma}[theorem]{Lemma}
\newtheorem{corollary}[theorem]{Corollary}
\theoremstyle{definition}
\newtheorem{assumption}{Assumption}
\newtheorem{condition}{Condition}
\newenvironment{myproof}[1][Proof]{\begin{proof}[#1]}{\end{proof}}
\newcommand*\samethanks[1][\value{footnote}]{\footnotemark[#1]}
\title{Fast Rates for the Regret of Offline Reinforcement Learning}
\author{Yichun Hu\thanks{Alphabetical order.},~~~~Nathan Kallus\samethanks,~~~~Masatoshi Uehara\samethanks\\Cornell University}
\date{}
\def\blockedit{}
\def\edit{}
\begin{document}

\maketitle

\begin{abstract}%
We study the regret of reinforcement learning from offline data generated by a fixed behavior policy in an infinite-horizon discounted Markov decision process (MDP). While existing analyses of common approaches, such as fitted $Q$-iteration (FQI), suggest a $O(1/\sqrt{n})$ convergence for regret, empirical behavior exhibits \emph{much} faster convergence. In this paper, we present a finer regret analysis that exactly characterizes this phenomenon by providing fast rates for the regret convergence. First, we show that given any estimate for the optimal quality function $Q^*$, the regret of the policy it defines converges at a rate given by the exponentiation of the $Q^*$-estimate's pointwise convergence rate, thus speeding it up. The level of exponentiation depends on the level of noise in the \emph{decision-making} problem, rather than the estimation problem. We establish such noise levels for linear and tabular MDPs as examples. Second, we provide new analyses of FQI and Bellman residual minimization to establish the correct pointwise convergence guarantees. As specific cases, our results imply $O(1/n)$ regret rates in linear cases and $\exp(-\Omega(n))$ regret rates in tabular cases. We extend our findings to general function approximation by extending our results to regret guarantees based on $L_p$-convergence rates for estimating $Q^*$ rather than pointwise rates, where $L_2$ guarantees for nonparametric $Q^*$-estimation can be ensured under mild conditions.
\end{abstract}

\section{Introduction}

Offline reinforcement learning (RL) is the problem of learning a reward-maximizing policy in an unknown Markov decision process (MDP) from data generated by running a fixed policy in the same MDP.
The problem is particularly relevant in applications where exploration is limited but observational data plentiful. Medicine is one such example: ethical, safety, and operational considerations limit both the application of unproven or random policies and the running of online-updating algorithms, while at the same time rich electronic health records are collected en-masse.

A variety of methods have been proposed for offline RL including fitted $Q$-iteration (FQI) \citep{ernst2005tree,munos2008finite}, fitted policy iteration \citep{LagoudakisMichail2004LPI,antos2008learning,liu2020provably}, modified Bellman Residual Minimization \citep{antos2008learning,ChenJinglin2019ICiB}, SBEED \citep{pmlr-v80-dai18c}, and MABO \citep{XieTengyang2020QASf}.
For all of these, the regret (value suboptimality) bounds obtained are $O(1/\sqrt{n})$, where $n$ is the number of observed transition data (see for example Chapter 15 of \citep{agarwal2019reinforcement} for a concise presentation of the analysis of FQI).
However, in practice, the regret convergence can actually be \emph{much} faster. For example, we provide a linear-MDP simulation experiment where FQI empirically exhibits an apparent regret convergence rate of $O(1/n)$.

In this paper, we tightly characterize this phenomenon by theoretically establishing fast rates for the regret convergence of value-based offline RL methods, which directly estimate the optimal quality function, $Q^{*}$. These rates leverage the specific noise level of a given problem instance, expressed as the density near zero of the suboptimality of the second-best action (if any), also known as a \emph{margin condition}.
RL instances generally satisfy \emph{some} instance-specific nontrivial margin condition. We moreover show that in the linear and tabular cases, we generally have quite strong margin conditions. We show that policies that are greedy with respect to good estimates of $Q^*$ enjoy a regret bounded by the pointwise estimation error raised to a power \emph{larger} than one, thus speeding up convergence for the downstream decision-making task. This analysis can be applied to any value-based offline RL method that has pointwise convergence guarantees for estimating $Q^{*}$. As specific examples, we establish that we can achieve such pointwise error bounds for the linear case using FQI and modified BRM (differently from existing analyses of their average error). Together, this means that, under the standard assumptions needed for FQI and modified BRM, \ie, closedneess under Bellman operators (completeness) and sufficient feature coverage, linear FQI and modified BRM generally achieve regret of order $O(1/n)$ in linear MDPs. Technically, our analysis melds techniques from fast-rate analysis of classification \citep{audibert2007fast} with the theoretical analyses of RL \citep{agarwal2019reinforcement} and of empirical risk minimization \citep{wainwright2019high}. {\blockedit Finally, we extend our results to accommodate any value-based offline RL method that provides $L_p$-guarantees over the offline data for estimating $Q^*$ in place of pointwise error guarantees. This is particularly relevant for $p=2$ (MSE), for which FQI and other methods can ensure $L_2$-guarantees (instead of pointwise) for general function approximation rather than just linear models.}
\subsection{Set Up}

We consider a time-homogeneous, finite-action, infinite-horizon, discounted MDP.
Namely, we have an arbitrary measurable state space $\S$ (\eg, continuous, discrete, or other), a finite actions space $\A$ (\ie, $\abs{\A}<\infty$), a reward distribution $P_r(\cdot\mid s,a)$ that maps to a probability measure on $\Rl$, a transition kernel $P_s(\cdot\mid s,a)$ that maps to a probability measure on $\S$,
an initial state distribution $\mu$ on $\S$, and a discount factor $0<\gamma<1$.
We let $r(s,a)$ denote the mean of $P_r(\cdot\mid s,a)$.

When we play a policy $\pi(a\mid s)$ in this MDP, the trajectory $s_0,a_0,r_0,s_1,a_1,r_1,\dots$ is given the distribution $s_0\sim \mu$, $a_0\sim\pi(\cdot\mid s_0)$, $r_0\sim P_r(\cdot\mid s_0,a_0)$, $s_1\sim P_s(\cdot\mid s_0,a_0)$, $a_1\sim\pi(\cdot\mid s_1)$, \ldots.
Since we consider different policies in the same MDP,
we refer to this distribution as $\pr^\pi$ and expectations over it as $\E^\pi$.
With some abuse of notation, we also identify maps $\pi:\S\to\A$ with the deterministic policy given by Dirac at $\pi(s)$ (\ie, $a_t=\pi(s_t)$).
For each policy $\pi$, we define the  $Q$-function, $V$-function, and average state occupancy distribution, respectively, as
\begin{align*}
    Q^{\pi}(s,a) & = \expect^\pi\bracks{\sum_{t=0}^{\infty} \gamma^t r(s_t, a_t)\mid s_0 = s,a_0=a},\\
    V^{\pi} (s) & = \expect^\pi\bracks{\sum_{t=0}^{\infty} \gamma^t r(s_t, a_t)\mid s_0 = s},\\
    d^{\pi}(S) & = (1-\gamma) \sum_{t=0}^{\infty} \gamma^t \pr^{\pi}(s_t \in S)\quad\text{for measurable $S$}.
\end{align*}
The reward of a policy $\pi$ is
$$
V^\pi=\E^\pi\bracks{\sum_{t=0}^\infty \gamma^t r_t}=\E_{s\sim d^\pi,a\sim\pi(\cdot\mid s)}[r(s,a)].
$$
We also define the optimal value, optimal $V$-function, and optimal $Q$-function, respectively, as
$$
V^*=\max_{\pi}V^\pi,~V^*(s)=\max_{\pi}V^\pi(s),~
Q^*(s,a)=\max_{\pi}Q(s,a).
$$
We always let $\pi^*$ be any deterministic policy with $\pi^*(s)\in\arg\max_{a\in\A}Q^*(s,a)$. Notice that $V^*=V^{\pi^*}$, $V^*(s)=V^{\pi^*}(s)$, $Q^*(s,a)=Q^{\pi^*}(s,a)$.

We also define the Bellman optimality operator: for $f:\mathcal S\times\mathcal A\to\Rl$,
$$
\mathcal{T}f(s,a) = r(s,a) + \gamma \expect_{s'\sim P_s(\cdot\mid s,a)} \max_{a'\in\mathcal{A}} f(s',a').
$$
Notice that $Q^*$ is the unique fixed point of $\mathcal T$ (up to measure zero states).

Two types of MDPs we will sometimes use just as specific examples are tabular and linear MDPs. A \emph{tabular MDP} is one with finite state space (we already assume the action space is finite).
A \emph{linear MDP} \citep{jin2020provably} is one where for some known $\phi:\mathcal S\times\mathcal A\to\R d$ 
with $\magd{\phi(s,a)}\leq1$
and unknown vector $\theta\in\R d$ and measures $\nu=(\nu_1,\dots,\nu_d)$, we have
$$
r(s,a)=\theta\tr\phi(s,a),\quad P_s(S\mid s,a)=\nu(S)\tr\phi(s,a)~~\text{for measurable $S$}.
$$

\paragraph{Notation}
All unsubscripted norms, $\magd\cdot$, are Euclidean norms.
For a function $f(s,a)$ we define $\|f\|_\infty=\sup_{s\in\S,a\in\A}\abs{f(s,a)}$.
Additional norms will be defined as needed. For a square symmetric matrix $A$ we let $\lambda_{\min}(A)$ be its smallest eigenvalue. 

\subsection{The Offline Reinforcement Learning Problem}
The learning problem is as follows.
The MDP is unknown and we only observe transitions data from some (possibly unknown) stochastic policy, known as the behavior policy. Namely, for some (possibly unknown) $\mu_b$, we observe $n$ independent and identically distributed (iid) draws $\mathcal D=\{(s_i,a_i,r_i,s_i'):i=1,\dots,n\}$ where each follows
$$
(s_i,a_i)\sim\mu_b,~r_i\sim P_r(\cdot\mid s_i,a_i),~s'_i\sim P_s(\cdot\mid s_i,a_i).
$$
We let $\pr_\D$ and $\E_\D$ denote the probability and expectation with respect to the random sampling of the data $\mathcal D$.
Based on this data, we choose a data-driven policy $\hat\pi$. The target is to find one with small average regret,
$$
\E_\D\bracks{V^*-V^{\hat\pi}}.
$$
In particular, we will focus on $Q$-greedy policies that, given some $f(s,a)$, are given by any $\pi_f(s)\in\arg\max_{a\in\A}f(s,a)$.
In particular, results will hold for any choice of tie breaking.
Note, if $f=Q^*$ then $\pi_f=\pi^*$. Given some hypothesis class $\F$, we also define $\Pi_\F=\{\pi_f:f\in\F\}\subseteq[\S\to\A]$.

\subsection{Fitted $Q$-Iteration} \label{section: fqi}

We will use FQI as one example of the regret behavior of offline RL. We present modified BRM as an additional example in \cref{sec:sbeed}.

The FQI algorithm is as follows \citep{ernst2005tree}:
\begin{enumerate}
\item Start at any $\hat f_0:\mathcal S\times\mathcal A\to\Rl$ (\eg, the zero function).
\item For $k=1,\dots,K$:
\begin{enumerate}
    \item Set $y_i=r_i+\gamma \max_{a'\in\A}\hat f_{k-1}(s'_i,a')$.
    \item Use any supervised learning algorithm to regress $y_i$ on $(s_i,a_i)$ to obtain $\hat f_k$. \label{alg: step 2b}
\end{enumerate}
\item Return $\hat f_K$ and $\hat\pi=\pi_{\hat f_K}$.
\end{enumerate}

When the supervised learning algorithm is given by empirical risk minimization of squared loss over a hypothesis class $\F$ (\ie, least squares), that step can be written as
\begin{equation}\label{eq:fqils}
\hat f_k\in\arg\min_{f\in\F}\sum_{i=1}^n\prns{f(s_i,a_i)-r_i-\gamma \max_{a'\in\A}\hat f_{k-1}(s'_i,a')}^2.
\end{equation}

\begin{figure}[t!]
\begin{center}
  \includegraphics[width=0.7\textwidth]{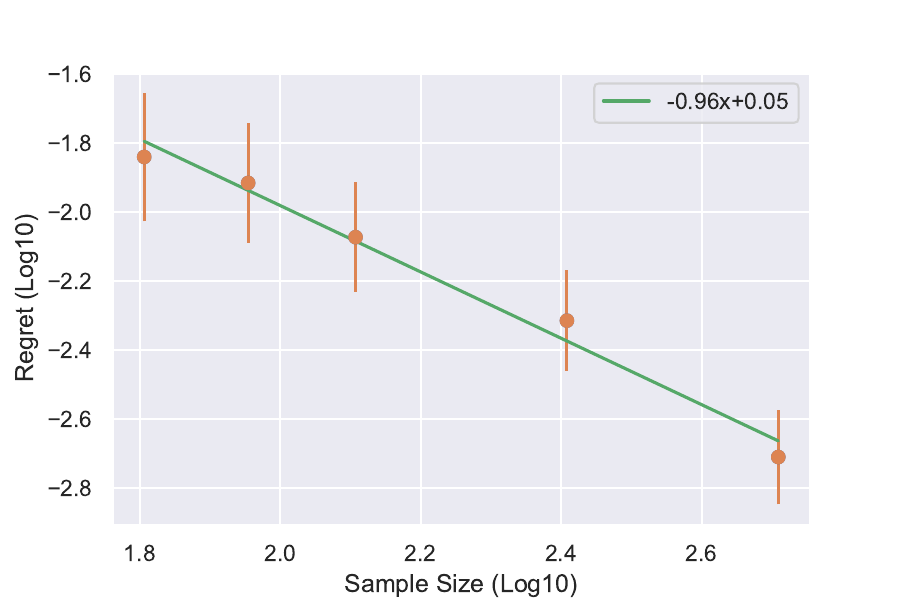}
\caption{The regret of FQI in a simple example on a log-log scale, along with a linear trend fit}
\label{fig:regret}
\end{center}
\end{figure}

\subsection{The Empirical Performance of FQI Belies Current Analysis}\label{sec:simpleexample}

We next study the performance of FQI in a simple example of a linear MDP. We construct the underlying MDP as follows. We set $\mathcal{S}=[0,1]^2,\,\mathcal{A}=\{0,1\}$. We set the initial distribution $\mu(\cdot)=\mathrm{Unif}_{[0,1]^2}(\cdot)$, where $\mathrm{Unif}_{[0,1]^2}(\cdot)$ is the uniform distribution on $[0,1]^2$.
We let $\phi(s,a)=(s_1(1-a),s_1a,1-s_1,s_2(1-a),s_2a,1-s_2)/2$, which belongs to the simplex in $\R 6$. We then set $r(s,a)=\theta\tr\phi(s,a)$ and
$$P_{s}(s'\mid s,a)=\sum_{k=1}^{6}\phi_k(s,a)\op{Beta}_{10\alpha_{k1},\,10\beta_{k1}}(s_1')\times \op{Beta}_{10\alpha_{k2},\,10\beta_{k2}}(s_2'),$$
where $\op{Beta}_{\alpha,\,\beta}(\cdot)$ is a Beta distribution.
We fix some $\theta,\alpha_{1,1},\beta_{1,1},\dots,\alpha_{6,2},\beta_{6,2}$ by drawing from $\op{Unif}_{[0,1]^{30}}$ (once, after setting the random seed to zero).
We let the behavior policy be uniform: $\mu_b(s,a)=\mu(s)\times\op{Ber}_{{0.5}}(a)$, where $\op{Ber}_{0.5}(\cdot)$ is a Bernoulli distribution with parameter $0.5$. We set the discount as $\gamma=0.9$. 

We then apply FQI to data generated by the above MDP and behavior policy using $K=50$ iterations and a linear function class, $\F=\{\beta\tr \phi(s,a):\beta\in\R 6\}$ (\ie, \cref{eq:fqils} becomes ordinary least squares). We vary the sample size $n\in[64,90,128,256,512]$ and run $70$ replications of the experiment for each sample size, in each replication recording the resulting regret $V^{*}-V^{\bar \pi }$.
Namely, we calculate the value of a given policy by running the policy on an independent sample of $40000$ initial states and truncating at step $50$, and we compute $\pi^*$ by running FQI with $K=100$ iterations on another independent dataset of size $n=40000$.
\edit{Since, FQI with $K=\Omega(\log(n))$ converges at a rate of $O(1/n)$ as we later show, running FQI with $n=40000,\,K=100$ provides a very good approximation for $\pi^*$ for benchmarking.}
We report the results in \cref{fig:regret} on a logarithmic scale along with $75\%$-confidence intervals for each $n$ and a linear trend fit to the log-log-transformed data.
The empirically observed slope with $75$\%-confidence interval is $-0.96\pm 0.08$, which is somewhat suggestive of a regret rate of roughly $O(1/n)$. This provides concrete empirical evidence that for some instances, we may be able to get regret convergence that is much faster than the $O(1/\sqrt n)$ appearing in the existing analyses of FQI and other offline RL algorithms. 

\section{Fast Rates for $Q$-Greedy Policies} \label{sec:q greedy}

In this section, we show that any estimate $\hat f$ of $Q^*$
with some rate of convergence leads to a $Q$-greedy policy 
with regret rate that is the \emph{exponentiation} of this estimation rate, and sometimes even an exponential to this rate.
This can possibly speed up the rate considerably.
The level of exponentiation depends on the level of noise in the 
downstream \emph{decision} problem 
(rather than in the $Q$-estimation problem), 
that is, how hard is it to distinguish
optimal actions from near-optimal actions (rather than how hard it is to estimate $Q^*$), also known as a margin in classification and bandit problems.

We define the margin at $s$ as
$$
\Delta(s) = \pw{
\max_{a\in \mathcal{A}} Q^*(s,a) - \max_{a\notin\argmax_{a\in\A}Q^*(s,a)} Q^*(s,a)\quad&\argmax_{a\in\A}Q^*(s,a)\neq \mathcal{A}\\0&\argmax_{a\in\A}Q^*(s,a)= \mathcal{A}}.$$
The margin can be smaller at some $s$ and larger at other $s$.
The larger the margin, the clearer is the choice of the optimal action, the easier it is to learn to make this optimal choice.
However, the margin may well be positive and arbitrarily close to $0$ in many continuous settings (while a $0$ margin leads to trivial decision making).
So, motivated by related conditions in classification \citep{mammen1999smooth,tsybakov2004optimal,audibert2007fast} and multi-arm contextual bandits \citep{perchet2013multi,hu2022smooth,bastani2020online},
we use the following condition to describe the \emph{density} of $\Delta(s)$ near (but not at) zero.
\begin{condition}[Margin] \label{ass: margin}
Fix some class of deterministic policies $\Pi\subseteq[\S\to\A]$.
There exist constants $\delta_0 >0, \alpha\in[0,\infty]$ such that for all $\delta>0$,
\begin{align*}
    \sup_{\pi\in\Pi}\pr_{s\sim d^{\pi}}(0<\Delta(s) \le \delta) \le (\delta/\delta_0)^{\alpha},
\end{align*}
where $x^\infty$ is understood as $0$ for $x\in[0,1)$, $1$ for $x=1$, and $\infty$ for $x>1$.
\end{condition}

We can often just take $\Pi=[\S\to\A]$ to be all deterministic policies for simplicity, but it will be sufficient to take only $\Pi=\Pi_\F$ when using a hypothesis class $\F$ for learning $Q^*$.
All instances satisfy \cref{ass: margin} with $\alpha=0$.
But, generally, a given instance would satisfy \cref{ass: margin} with some $\alpha>0$.
At one extreme, if $\Delta(s)$ is uniformly bounded away from $0$ over $s$ then \cref{ass: margin} holds with $\alpha=\infty$. 
We give examples where we can establish a margin below in \cref{sec:marginexamples}

Our result applies to $Q$-greedy policies given a good estimate $\hat f$ of $Q^*$. Our next condition quantifies the quality of the estimate.

\begin{condition}[Pointwise error bound] \label{cond:ptwse}
  A data-driven $\hat{f}(s,a)$ is given such that for some $C>0$ and $a_n>0$ it satisfies that, for any $(s,a) \in \{(s,a) \in \mathcal{S}\times \mathcal{A}:\exists \pi \in \Pi, d_{\pi}(s,a)>0  \}$ and $\delta\ge a_n$, we have
        \begin{align} \label{eq: infty convergence}
          \pr_{\mathcal{D}}(|\hat{f}(s,a)- Q^*(s,a)| \ge \delta) \le C \exp(-\delta^2/a_n^2).
    \end{align}
\end{condition}
\Cref{eq: infty convergence} is a \emph{pointwise} convergence bound for $\hat f$ with rate $a_n$. In our second main result, in \cref{sec:fqi}, we will show that linear FQI satisfies an even stronger \emph{uniform} convergence bound (with the supremum over $s,a$ inside the probability) with $a_n=1/\sqrt{n}$. 
In \cref{sec:sbeed}, we show similar results for Bellman residual minimization.  

In \cref{sec:sbeed}, we show similar results for Bellman residual minimization.

Using \cref{ass: margin,cond:ptwse}, we can now establish our key rate-speed-up result.
\begin{theorem} \label{thm: fast rate}
Let a data-driven $\hat f$ be given and let $\Pi$ be given such that $\pi_{\hat f}\in\Pi$ almost surely.
        Suppose \cref{ass: margin,cond:ptwse} hold and $\norm{Q^*}_{\infty} \le Q_{\max}$.
        Fix any $\delta_1 \ge \delta_0$ with $\delta_1>2a_n$.
        Let $i_{\max} $ be the largest integer such that $2^{i_{\max}+1}a_n < \delta_1$. 
        Then, for 
        $\hat\pi=\pi_{\hat f}$,
        we have
        \begin{align*}
            \expect_\D[V^* - V^{\hat{\pi}}] &\le  \frac{2^{\alpha+1} }{(1-\gamma) \delta_0^{\alpha}} \prns{1  + C \sum_{i=1}^{i_{\max}}  \exp\prns{- 2^{2i-2}}2^{(\alpha+1)i+1}} a_n^{\alpha+1} \\
            &\phantom{\le}+ \frac{2Q_{\max}C}{1-\gamma} \exp\prns{-  \delta_1^2/(4a_n)^2}.
        \end{align*}
\end{theorem}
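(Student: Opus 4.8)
The plan is to start from the exact regret identity furnished by the performance-difference lemma. Applied to the pair $(\pi^*,\hat\pi)$ it gives
\[
V^* - V^{\hat\pi} = \frac{1}{1-\gamma}\,\expect_{s\sim d^{\hat\pi}}\bracks{V^*(s) - Q^*(s,\hat\pi(s))},
\]
so, writing $g(s) = V^*(s) - Q^*(s,\hat\pi(s))\ge 0$ for the per-state suboptimality, the target reduces to bounding $\frac{1}{1-\gamma}\expect_\D\expect_{s\sim d^{\hat\pi}}[g(s)]$. The point of this form is that the occupancy measure is that of the \emph{learned} policy $\hat\pi=\pi_{\hat f}$, which lies in $\Pi$ almost surely, so \cref{ass: margin} applies directly to $d^{\hat\pi}$.

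The key deterministic observation links a decision mistake to a large estimation error. Let $\epsilon(s)=\max_{a}|\hat f(s,a)-Q^*(s,a)|$. Whenever $g(s)>0$ the greedy action $\hat\pi(s)$ is suboptimal, so on the one hand $g(s)\ge\Delta(s)>0$; on the other hand, comparing $\hat f$ at $\hat\pi(s)$ and at $\pi^*(s)$ and adding and subtracting $Q^*$ gives $g(s)\le 2\epsilon(s)$. Hence $g(s)>0$ forces both $\Delta(s)\le 2\epsilon(s)$ and $\epsilon(s)\ge g(s)/2$; in particular $\{g(s)>t\}\subseteq\{\,0<\Delta(s),\ \epsilon(s)>t/2\,\}$ for any $t>0$, with the crude cap $g(s)\le 2Q_{\max}$. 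This is exactly what converts the pointwise tail of \cref{cond:ptwse} into control of mistakes.

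The main step is a layer-cake (peeling) argument. Writing $\expect_{s\sim d^{\hat\pi}}[g(s)]=\int_0^{2Q_{\max}}\pr_{s\sim d^{\hat\pi}}(g(s)>t)\,dt$ and taking $\expect_\D$, I split the range of $t$ at the dyadic points $2a_n,4a_n,\dots,2^{i_{\max}+1}a_n$ and at $\delta_1$. On the lowest band $t\in[0,2a_n)$ I bound the probability by the mass of small-margin states, $\pr_{s\sim d^{\hat\pi}}(0<\Delta(s)\le 2a_n)\le(2a_n/\delta_0)^\alpha$ from \cref{ass: margin}; integrating over the band width $2a_n$ yields the base term $2^{\alpha+1}a_n^{\alpha+1}/\delta_0^\alpha$ and needs no error bound. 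On band $i$, i.e.\ $t\in[2^i a_n,2^{i+1}a_n)$, a mistake of size $>t$ requires $\epsilon(s)>2^{i-1}a_n$, which by \cref{cond:ptwse} (a union bound over $\A$ with $\delta=2^{i-1}a_n\ge a_n$) has probability at most $\sim C\exp(-2^{2i-2})$, while \cref{ass: margin} bounds the mass of the relevant small-margin states by $\sim(2^{i+1}a_n/\delta_0)^\alpha$; the product of these two factors times the band width $\sim 2^i a_n$ reproduces the $i$-th summand $\propto C\exp(-2^{2i-2})2^{(\alpha+1)i}a_n^{\alpha+1}/\delta_0^\alpha$. Finally, for $t\ge\delta_1$ I use $g\le 2Q_{\max}$ together with the tail probability of $\epsilon(s)>\delta_1/2$, producing the additive $\exp(-\Omega(\delta_1^2/a_n^2))$ term; collecting the three pieces and dividing by $1-\gamma$ gives the stated inequality.

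I expect the main obstacle to be the simultaneous dependence of the occupancy measure $d^{\hat\pi}$ and the error field $\epsilon(\cdot)$ on the \emph{same} data $\mathcal D$: to turn each band into a product of a margin factor and an error-probability factor one must bound $\expect_\D\expect_{s\sim d^{\hat\pi}}[\ind(0<\Delta(s)\le\delta)\,\ind(\epsilon(s)>u)]$ by $(\delta/\delta_0)^\alpha$ times (essentially) $C\exp(-u^2/a_n^2)$, yet one cannot naively exchange $\expect_\D$ and $\expect_{s\sim d^{\hat\pi}}$ since the measure $d^{\hat\pi}$ is itself random. The two uniformities in the hypotheses are what rescue this: \cref{ass: margin} holds uniformly over all $\pi\in\Pi\ni\hat\pi$, so the mass of small-margin states (and hence of any positive-margin region) is controlled for \emph{every} realization of $\mathcal D$, while \cref{cond:ptwse} holds uniformly over states with the same constants $C,a_n$, so the error tail is controlled at every fixed $s$. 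Carefully combining these—handling the margin factor within each fixed realization and the error tail state-by-state—is the delicate part; the remainder is bookkeeping of the dyadic sums and constants.
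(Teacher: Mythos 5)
Your proposal is correct and follows essentially the same route as the paper's proof: the performance-difference lemma, a dyadic peeling of $Q^*(s,\pi^*(s))-Q^*(s,\hat\pi(s))$ (your layer-cake integral is just the continuous form of the paper's events $B_0,B_i,B'$), the same deterministic observation that a mistake of size $g(s)$ forces both $0<\Delta(s)\le g(s)$ and an estimation error of at least $g(s)/2$, and the same decoupling of the data-dependent occupancy measure from the data-dependent error via the $\sup_{\pi\in\Pi}$ relaxation combined with the state-by-state tail bound. The only substantive difference is that you control the error through $\epsilon(s)=\max_{a}|\hat f(s,a)-Q^*(s,a)|$ with a union bound over all of $\A$, whereas the paper only needs the two actions $\pi^*(s)$ and $\hat\pi(s)$, so your constants would come out as $|\A|C$ (and a cap of $2Q_{\max}$) in place of the stated $2C$ and $Q_{\max}$.
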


If either $\Pi=[\S\to\A]$ or $\Pi=\Pi_\F$ and $\hat f\in\F$ then obviously $\pi_{\hat f}\in\Pi$ is satisfied. 
Note that we could also easily only require that $\pi_{\hat f}\in\Pi$ with high probability and the failure probability would simply propagate into the regret bound.
The key to the result is to leverage the performance difference lemma \citep[Lemma 1.16]{agarwal2019reinforcement} together with a peeling argument to study the behavior at different scales of margin.

We immediately have the following two corollaries to simplify the above expression, one for the case $\alpha<\infty$ and one for the case $\alpha=\infty$.
\begin{corollary}\label{cor:finitemargin}
    Suppose $\pi_{\hat f}\in\Pi$ almost surely, \cref{cond:ptwse} holds, and \cref{ass: margin} holds with $\alpha<\infty$. 
        Then, for 
        $\hat\pi=\pi_{\hat f}$,
        we have
        \begin{align*}
            \expect_\D[V^* - V^{\hat{\pi}}] \le \frac{2^{\alpha+1} }{(1-\gamma) \delta_0^{\alpha}} \prns{1  +   c(\alpha) C} a_n^{\alpha+1} ,
        \end{align*}
where $c(\alpha) = \sum_{i=1}^{\infty}  \exp\prns{- 2^{2i-2}}2^{(\alpha+1)i+1} \le \frac{2^{\alpha+1}\Gamma\prns{\frac{\alpha+1}{2},1}}{\log 2} + 2\prns{\frac{2(\alpha+1)}{e}}^{(\alpha+1)/2}$, with $\Gamma\prns{\frac{\alpha+1}{2},1} = \int_1^{\infty} x^{\frac{\alpha-1}{2}}e^{-x} dx<\infty$.
\end{corollary}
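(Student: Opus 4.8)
The plan is to obtain \cref{cor:finitemargin} directly from \cref{thm: fast rate} by sending the free parameter $\delta_1$ to infinity. For any admissible $\delta_1$, every summand in the finite sum $\sum_{i=1}^{i_{\max}} \exp(-2^{2i-2}) 2^{(\alpha+1)i+1}$ is nonnegative, so this partial sum is bounded above by the full series $c(\alpha) = \sum_{i=1}^{\infty} \exp(-2^{2i-2}) 2^{(\alpha+1)i+1}$. Hence the bound of \cref{thm: fast rate} already implies, for every admissible $\delta_1$,
\begin{align*}
\expect_\D[V^* - V^{\hat\pi}] \le \frac{2^{\alpha+1}}{(1-\gamma)\delta_0^\alpha}\prns{1 + c(\alpha) C} a_n^{\alpha+1} + \frac{2Q_{\max}C}{1-\gamma}\exp\prns{-\delta_1^2/(4a_n)^2}.
\end{align*}
The left-hand side does not depend on $\delta_1$, so letting $\delta_1 \to \infty$ makes the last term vanish and yields the claimed inequality, \emph{provided} the series $c(\alpha)$ converges. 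This convergence is exactly where finiteness of $\alpha$ enters: the factor $\exp(-2^{2i-2})$ decays doubly exponentially in $i$ and so dominates the geometric growth $2^{(\alpha+1)i+1}$ for every finite $\alpha$, giving $c(\alpha)<\infty$.

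It then remains to establish the explicit upper bound on $c(\alpha)$. First I would write $c(\alpha) = 2\sum_{i=1}^\infty g(i)$ with $g(i) = 2^{(\alpha+1)i}\exp(-2^{2i-2})$ and pass to the continuous variable $x = 2^{2i-2}$, under which $2^{(\alpha+1)i} = 2^{\alpha+1} x^{(\alpha+1)/2}$ and $di = dx/(2x\log 2)$. This identifies the associated integral as
\begin{align*}
\int_1^\infty g(i)\, di = \frac{2^\alpha}{\log 2}\int_1^\infty x^{(\alpha-1)/2} e^{-x}\, dx = \frac{2^\alpha}{\log 2}\,\Gamma\prns{\tfrac{\alpha+1}{2}, 1},
\end{align*}
where the lower limit $x=1$ corresponds to $i=1$ and the right-hand side is the upper incomplete Gamma function from the statement.

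The final step is a sum-to-integral comparison. Viewed as a function of continuous $i$ (equivalently of $x$), $g$ is unimodal, increasing then decreasing, with maximum at $x=(\alpha+1)/2$; a direct optimization gives $g_{\max} = \prns{2(\alpha+1)/e}^{(\alpha+1)/2}$. For a nonnegative unimodal function one has $\sum_{i=1}^\infty g(i) \le g_{\max} + \int_1^\infty g(i)\, di$, obtained by bounding each term below the peak by the integral over the unit interval to its right and each term above the peak by the integral over the unit interval to its left, with the single peak term absorbed into $g_{\max}$. Multiplying by $2$ and inserting the two preceding displays gives
\begin{align*}
c(\alpha) \le 2 g_{\max} + 2\int_1^\infty g(i)\, di = \frac{2^{\alpha+1}\Gamma\prns{\tfrac{\alpha+1}{2}, 1}}{\log 2} + 2\prns{\frac{2(\alpha+1)}{e}}^{(\alpha+1)/2},
\end{align*}
as claimed. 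The only delicate point is the unimodal comparison, since the peak term must be accounted for separately rather than being dominated by an adjacent integral; everything else is the change of variables and the recognition of the incomplete Gamma function.
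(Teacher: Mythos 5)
Your proposal is correct and takes essentially the same approach as the paper: the paper likewise obtains the bound by ``setting $\delta_1=\infty$'' in \cref{thm: fast rate} (your limiting argument just makes this step explicit) and then bounds $c(\alpha)$ by comparing the unimodal summand to its integral plus the single peak value $\prns{2(\alpha+1)/e}^{(\alpha+1)/2}$, with the substitution $x=2^{2i-2}$ yielding the incomplete Gamma term.
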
%
\Cref{cor:finitemargin} shows that the \emph{estimation} rate in \cref{cond:ptwse} gets sped up by exponentiation by $1+\alpha$ when applied to the downstream \emph{decision making} problem. Thus, however fast we are able to estimate $Q^*$, our regret converges \emph{even faster}. 
Notice that in \cref{cor:finitemargin} we do not actually need to assume a bound on $\magd{Q^*}_\infty$.
\edit{Heuristically, \cref{cor:infinitemargin} is obtained by taking $\delta_1 = \infty$ in \cref{thm: fast rate} and noting that the term that involves $Q_{\max}$ in the upper bound becomes $0$. The actual proof simply involves redoing the analysis of \cref{thm: fast rate} with $\delta_1 = \infty$ so that $i_{\max}=\infty$ and never encountering the final term where we needed to leverage the $Q_{\max}$-bound.}

\begin{corollary}\label{cor:infinitemargin}
    Suppose $\pi_{\hat f}\in\Pi$ almost surely, \cref{cond:ptwse} holds, \cref{ass: margin} holds with $\alpha = \infty$, and $\norm{Q^*}_{\infty} \le Q_{\max}$. Let $n$ be such that $a_n<\delta_0/2$.
    Then, for 
    $\hat\pi=\pi_{\hat f}$,
    we have
    \begin{align*}
        \expect_\D[V^* - V^{\hat{\pi}}] \le \frac{2Q_{\max}C}{1-\gamma} \exp\prns{-   \delta_0^2/(4a_n)^2}.
    \end{align*}  
\end{corollary}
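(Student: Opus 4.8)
The plan is to derive this directly from \cref{thm: fast rate} by choosing $\delta_1 = \delta_0$, rather than re-running the peeling argument from scratch. First I would check that this choice is admissible: the theorem requires $\delta_1 \ge \delta_0$, which holds with equality, and $\delta_1 > 2a_n$, which becomes $\delta_0 > 2a_n$ and is exactly the standing hypothesis $a_n < \delta_0/2$. With $\delta_1 = \delta_0$ the second term of the theorem's bound is already precisely $\frac{2Q_{\max}C}{1-\gamma}\exp(-\delta_0^2/(4a_n)^2)$, so the entire content of the corollary reduces to showing that the first term vanishes when $\alpha = \infty$. Conceptually this first term collects the contribution of states whose margin lies strictly between $0$ and $\delta_1=\delta_0$, and under $\alpha=\infty$ the margin condition assigns those states zero mass under every $d^\pi$ with $\pi\in\Pi$; the work is to see this cleanly at the level of the closed-form bound.

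Showing the first term is zero is where the one subtlety lies. Substituting $\alpha = \infty$ naively is hazardous because the prefactor carries $a_n^{\alpha+1}/\delta_0^{\alpha} \to 0$ while the summand carries $2^{(\alpha+1)i+1} \to \infty$, an apparent $0 \cdot \infty$. The resolution I would use is to distribute the factor $\delta_0^{-\alpha}$ into each piece before taking the exponent to infinity. The constant term then becomes $\frac{2a_n}{1-\gamma}(2a_n/\delta_0)^{\alpha}$, and the $i$-th summand becomes $\frac{4a_n C\exp(-2^{2i-2})}{1-\gamma}\,2^{i+1}(2^{i+1}a_n/\delta_0)^{\alpha}$, using the identity $2^{(\alpha+1)i}(2a_n/\delta_0)^\alpha = 2^i(2^{i+1}a_n/\delta_0)^\alpha$, so that each dyadic scale is governed by a power with base $2^{i+1}a_n/\delta_0$ multiplied only by bounded quantities.

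Now every base appearing is strictly below one: the constant term has base $2a_n/\delta_0 < 1$ by $a_n < \delta_0/2$, and for $1 \le i \le i_{\max}$ the base $2^{i+1}a_n/\delta_0$ satisfies $2^{i+1}a_n \le 2^{i_{\max}+1}a_n < \delta_1 = \delta_0$ by the very definition of $i_{\max}$. Under the convention $x^{\infty}=0$ for $x \in [0,1)$, every such power is exactly $0$, and since $i_{\max}$ is finite the first term is a finite sum of zeros and hence equals $0$. The main (and only) obstacle is precisely this bookkeeping: making sure the $\delta_0^{-\alpha}$ is absorbed so that each scale is controlled by a base below one before the limit is taken, which is what converts the spurious indeterminate form into an exact cancellation. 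Once the first term is gone, the surviving second term is exactly the stated bound, completing the proof.
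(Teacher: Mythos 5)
Your proposal is correct and matches the paper's intended derivation: the corollary follows from \cref{thm: fast rate} with $\delta_1=\delta_0$ (admissible since $a_n<\delta_0/2$), where the second term is exactly the stated bound and the first term vanishes under $\alpha=\infty$. Your careful regrouping of $\delta_0^{-\alpha}$ into each dyadic summand so that every power has base $2^{i+1}a_n/\delta_0<1$ (hence equals $0$ under the paper's convention for $x^\infty$) is precisely the right way to resolve the apparent $0\cdot\infty$, and the paper's own peeling proof bounds each $\pr(B_i)$ by $(2^{i+1}a_n/\delta_0)^\alpha$ in exactly this form before collecting terms.
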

\Cref{cor:infinitemargin} shows that in the $\alpha=\infty$ case, our regret vanishes \emph{exponentially} fast.
While \cref{cor:finitemargin,cor:infinitemargin} provide a simple understanding of the behavior in $n$ at any fixed $\alpha$, if $\alpha$ is finite but very big (\eg, a regime where $\alpha=\omega(1)$ with respect to $n$) then \cref{thm: fast rate} with $\delta_1=\delta_0$ 
characterizes the correct trade-off between the polynomial and exponential terms.

\subsection{The Margin Condition in Some Examples}\label{sec:marginexamples}

We next discuss some cases where we can explicitly demonstrate a nontrivial margin condition.
The heuristic implication is that we should \emph{generically} expect $\alpha=1$ in continuous-state settings and $\alpha=\infty$ in tabular settings. 

The next lemma shows that if $Q^*$ is linear and under a kind of weak concentratability assumption, we have $\alpha=1$.
\begin{lemma}\label{lemma:linearmargin}
Suppose $Q^*(s,a)=\beta_a\tr\psi(s)$ for some $\psi:\S\to\R d$ with $\magd{\psi(s)}\leq 1$ and $\beta\in\R {\A\times d}$, and that $\psi(s)$ with $s\sim d^\pi$ has a density for each $\pi\in\Pi$ and this density is bounded by $\mu_{\max}$.
Then, \cref{ass: margin} holds with $\alpha=1$ and $\delta_0=\fprns{6\mu_{\max}\sum_{a\in\A}\max_{a'\in\A:\beta_a\neq\beta_{a'}}\magd{\beta_a-\beta_{a'}}^{-1}}^{-1}$.
\end{lemma}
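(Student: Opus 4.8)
The plan is to fix a policy $\pi\in\Pi$, write $X=\psi(s)$ for $s\sim d^{\pi}$ (a random vector supported in the closed unit ball $B\subseteq\R d$ with density bounded by $\mu_{\max}$), and reduce the margin event to a statement about linear functions of $X$. Since $Q^*(s,a)-Q^*(s,a')=(\beta_a-\beta_{a'})\tr\psi(s)$, on the event $\{0<\Delta(s)\le\delta\}$ there is an (a.s.\ unique) optimal action $a^*$ and a runner-up $a'$ with $(\beta_{a^*}-\beta_{a'})\tr X=\Delta(s)\in(0,\delta]$ and $\beta_{a^*}\neq\beta_{a'}$. Two actions with identical $\beta$ never produce a positive gap and may be discarded, which is exactly why the stated $\delta_0$ ranges only over $a'$ with $\beta_a\neq\beta_{a'}$; the set where two distinct-$\beta$ actions tie is a finite union of hyperplanes, hence $X$-null, so ties may be ignored throughout.

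A first, crude bound already yields $\alpha=1$. Containing the margin event in a union of slabs, $\{0<\Delta(s)\le\delta\}\subseteq\bigcup_{a\neq a':\,\beta_a\neq\beta_{a'}}\{0<(\beta_a-\beta_{a'})\tr X\le\delta\}$, and using $\pr(X\in A)\le\mu_{\max}\,\mathrm{vol}(A)$ together with the fact that a slab of thickness $\delta\magd{\beta_a-\beta_{a'}}^{-1}$ meets the unit ball in volume at most $\magd{\beta_a-\beta_{a'}}^{-1}\delta\,\omega_{d-1}$, where $\omega_k$ is the volume of the unit $k$-ball, gives $\pr(0<\Delta(s)\le\delta)\le\omega_{d-1}\mu_{\max}\delta\sum_a\sum_{a'\neq a}\magd{\beta_a-\beta_{a'}}^{-1}$. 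The central slice is the largest cross-section, so the geometric factor is exactly $\omega_{d-1}$, and since $\max_{k\ge0}\omega_k<6$ this already establishes \cref{ass: margin} with $\alpha=1$, albeit with a double-sum constant.

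To sharpen the double sum to the stated $\sum_a\max_{a'}$, I would split over the optimal action only rather than over pairs. For fixed $a$, the event $\{a=\argmax_{a''}Q^*(s,a''),\ 0<\Delta(s)\le\delta\}$ equals $\{0<G_a(X)\le\delta\}$, where $G_a(x)=\min_{a':\beta_{a'}\neq\beta_a}(\beta_a-\beta_{a'})\tr x$ is concave and positively homogeneous and $P_a=\{G_a\ge0\}$ is the convex cone of states at which $a$ is optimal. Bounding the probability by $\mu_{\max}\,\mathrm{vol}(\{0<G_a\le\delta\}\cap B)$ and applying the coarea formula with $\abs{\nabla G_a(x)}=\magd{\beta_a-\beta_{a'(x)}}\ge\min_{a'}\magd{\beta_a-\beta_{a'}}$ reduces matters to $\mathrm{vol}(\{0<G_a\le\delta\}\cap B)\le\max_{a'}\magd{\beta_a-\beta_{a'}}^{-1}\int_0^\delta\mathcal H^{d-1}(\{G_a=t\}\cap B)\,dt$. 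The point is that a single level surface, not a sum of facets, now controls the estimate, and this is precisely what converts the sum over $a'$ into the maximum. Summing over $a$ and taking $\sup_{\pi\in\Pi}$ then gives the claim, provided each level-set area is at most $\omega_{d-1}\le6$.

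The main obstacle is exactly this geometric estimate: $\mathcal H^{d-1}(\{G_a=t\}\cap B)\le\omega_{d-1}$ for every $t\ge0$. The extremal case is $t=0$, where $\{G_a=0\}\cap B=\partial P_a\cap B$ is the cone over its spherical link $\Gamma=\partial P_a\cap S^{d-1}$, so that $\mathcal H^{d-1}(\partial P_a\cap B)=\tfrac{1}{d-1}\mathcal H^{d-2}(\Gamma)$; since $P_a\cap S^{d-1}$ is spherically convex and contained in a hemisphere, its perimeter $\mathcal H^{d-2}(\Gamma)$ should be at most that of a great subsphere, namely $(d-1)\omega_{d-1}$, which yields the bound. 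Establishing this spherical-convexity perimeter inequality, and checking that $t>0$ only shrinks the surface, is the delicate step; the remaining ingredients—homogeneity of $G_a$, the null-set handling of ties, and the elementary fact $\max_k\omega_k<6$—are routine.
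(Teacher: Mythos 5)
Your proposal has the same two-stage architecture as the paper's proof, and the first stage is complete: the union-over-pairs slab bound with the $\omega_{d-1}\le 6$ volume constant is exactly the paper's ``weaker result,'' and it already delivers \cref{ass: margin} with $\alpha=1$ but with $\delta_0^{-1}$ proportional to the double sum $\sum_a\sum_{a'}\|\beta_a-\beta_{a'}\|^{-1}$ rather than the stated $\sum_a\max_{a'}$. The sharpening is where the real content lies, and your reduction of it is right --- partition by the optimal action $a$, observe that the event becomes a thin collar $\{0<G_a\le\delta\}$ around the boundary of the convex optimality cone $P_a$ inside the unit ball, and bound its volume by thickness times the lateral area of the cone --- but the proposal stops precisely at the one inequality that does the work. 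The claim that the spherical perimeter $\mathcal H^{d-2}(\partial P_a\cap S^{d-1})$ of a spherically convex region contained in a hemisphere is at most that of the equator, $(d-1)\omega_{d-1}$, is asserted as something that ``should be'' true; it is true, but it is not routine, and proving it is the entire second half of the paper's argument. The paper establishes it by adding the defining halfspaces of $P_a$ one at a time and showing that each cut removes at least as much spherical perimeter as the new face it creates, because the removed portion of the boundary projects onto the new face (one side of a spherical polyhedron cannot exceed the sum of the others). Without this, you have proved the lemma only with the weaker double-sum constant, which is not the statement as given.

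A secondary gap: your coarea route additionally needs $\mathcal H^{d-1}(\{G_a=t\}\cap\mathcal B_d)\le\mathcal H^{d-1}(\{G_a=0\}\cap\mathcal B_d)$ for $t>0$, which you flag but do not justify. This does not follow directly from monotonicity of surface area under inclusion of convex bodies: that fact compares the \emph{total} boundary areas of $\{G_a\ge t\}\cap\mathcal B_d$ and $P_a\cap\mathcal B_d$, whereas you need the comparison for the lateral (level-set) portions alone, and the spherical portions of the two boundaries differ. The paper sidesteps this entirely by covering the collar with prisms of height $\delta'$ erected on the facets of the cone itself, so that only the $t=0$ surface ever enters the estimate. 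To complete your version you would either adopt that covering or supply an explicit area-nonincreasing map from $\{G_a=t\}\cap\mathcal B_d$ into $\partial P_a\cap\mathcal B_d$.
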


A linear MDP is sufficient for the condition on $Q^*$ as we can take $\psi(s)=(\phi(s,a)/\sqrt{\abs{\A}})_{a\in\A}$.
The assumption of uniformly bounded density is not especially restrictive. In offline RL, we often assume some type of overlap condition; for example, that $d^{\pi}$ and $\mu_b$ have densities (let us overload notation and call these densities $d^{\pi},\,\mu_b$) and that
$\sup_{\pi\in\Pi_\F}\|d^{\pi}(s)\pi(a\mid s)/\mu_b(s,a)\|_\infty\leq C_1$ (\eg, \citep{XieTengyang2020QASf}).
(See also \citep{pmlr-v32-scherrer14} for a discussion of various overlap conditions.) This implies that the condition in \cref{lemma:linearmargin} is satisfied with $\mu_{\max}=C_1\|\mu_b\|_\infty$, if $\|\mu_b\|_\infty<\infty$.

{\blockedit

The result essentially continues to hold even if $s\mapsto (Q^*(s,a))_{a\in\A}$ is nonlinear 
as long as the $Q$-difference functions have lower-bounded derivatives along one direction,
as we show next.
This suggests we can very generally expect to have $\alpha=1$ in practice in continuous-state-space settings.

\begin{lemma}\label{lemma: margin for continuous functions}
Assume $\mathcal{S}$ is contained in the $d$-dimensional unit ball and that, 
for each $\pi\in\Pi$, $s\sim d^\pi$ has a density and it is bounded by $\mu_{\max}$.
Suppose there exists a constant $\tau_0>0$ such that 
for any $a, a'\in \mathcal A$, there is some vector $v$ with $\norm{v} = 1$ such that for any $s\in \mathcal{S}$, the $v$-directional derivative of $Q^*(s,a) - Q^*(s, a')$ exists and is bounded below by $\tau_0$, that is,
\begin{align*}
    \abs{\lim_{t\to0}t^{-1}(Q^*(s+tv,a) - Q^*(s+tv, a') - Q^*(s,a) + Q^*(s, a'))} \ge \tau_0.
 \end{align*}
Then, \cref{ass: margin} holds with $\alpha=1$ and $\delta_0= \tau_0/(6 |\mathcal A|^2 \mu_{\max})$.
\end{lemma}

}

In the tabular setting, we trivially have $\alpha=\infty$, albeit with a $\delta_0$ that might be small. The following follows trivially by enumeration. 
\begin{lemma}\label{lemma:tabularmargin}
Suppose $\abs{\S}<\infty$. Set $\Pi=[\S\to\A]$.
Then, \cref{ass: margin} holds with $\alpha=\infty$ and $\delta_0=\max_{s,a, a':Q^*(s,a)\neq Q^*(s,a')}\abs{Q^*(s,a)-Q^*(s,a')}^{-1}$.
\end{lemma}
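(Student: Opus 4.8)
The plan is to exploit the finiteness of $\S\times\A$ to show that the margin $\Delta(s)$, whenever it is strictly positive, cannot be arbitrarily small: it is bounded below by a fixed positive constant, which immediately forces $\alpha=\infty$. Concretely, since $\S$ and $\A$ are both finite, the collection of pairwise differences $\{|Q^*(s,a)-Q^*(s,a')|:s\in\S,\ a,a'\in\A,\ Q^*(s,a)\neq Q^*(s,a')\}$ is a finite set of strictly positive reals. I would take $\delta_0$ to be its minimum, which is well defined and strictly positive precisely because the set is finite; if the set happens to be empty, then $Q^*(s,\cdot)$ is constant in $a$ at every $s$, so $\Delta\equiv0$ and the claim holds vacuously for any $\delta_0>0$.

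The key step is to observe that $\Delta(s)$ is \emph{itself} one of these pairwise differences whenever it is positive, not merely bounded by one. Fix $s$ with $\arg\max_{a}Q^*(s,a)\neq\A$, let $a^*$ attain $\max_{a}Q^*(s,a)$, and let $\tilde a$ attain $\max_{a\notin\arg\max_aQ^*(s,a)}Q^*(s,a)$. By construction $Q^*(s,\tilde a)<Q^*(s,a^*)$, so $\Delta(s)=Q^*(s,a^*)-Q^*(s,\tilde a)=|Q^*(s,a^*)-Q^*(s,\tilde a)|$ is a strictly positive pairwise difference, whence $\Delta(s)\ge\delta_0$. In the remaining case $\arg\max_aQ^*(s,a)=\A$ we have $\Delta(s)=0$ by definition. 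Consequently $\Delta(s)\in\{0\}\cup[\delta_0,\infty)$ for every $s\in\S$; that is, no state satisfies $0<\Delta(s)<\delta_0$.

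It then remains to translate this pointwise fact into the probabilistic bound of \cref{ass: margin}. For any $\delta<\delta_0$ the set $\{s\in\S:0<\Delta(s)\le\delta\}$ is empty, so $\pr_{s\sim d^\pi}(0<\Delta(s)\le\delta)=0$ for every policy $\pi$, and hence the supremum over $\Pi=[\S\to\A]$ is $0=(\delta/\delta_0)^\infty$; for $\delta\ge\delta_0$ the stated bound $(\delta/\delta_0)^\infty\in\{1,\infty\}$ holds trivially. This is exactly \cref{ass: margin} with $\alpha=\infty$. There is essentially no obstacle here, as the argument is pure enumeration; the only points needing a line of care are (i) verifying that $\Delta(s)$, when positive, is genuinely realized as a gap, and (ii) noting that the bound is uniform over $\Pi$ for free, because the offending set is an empty \emph{subset of states} and therefore has zero probability under every occupancy measure $d^\pi$ simultaneously.
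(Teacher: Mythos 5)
Your proof is correct and is exactly the enumeration the paper has in mind: the paper offers no argument beyond the remark that the lemma ``follows trivially by enumeration,'' and your three steps (finiteness of the set of nonzero pairwise gaps, the observation that a positive $\Delta(s)$ is itself realized as such a gap at the same state, and the resulting emptiness of $\{s:0<\Delta(s)\le\delta\}$ for $\delta$ below the minimum gap, which gives probability zero under every $d^\pi$ simultaneously) are precisely that enumeration. One point worth flagging: you take $\delta_0$ to be the \emph{minimum} nonzero gap $\min_{s,a,a':Q^*(s,a)\neq Q^*(s,a')}\abs{Q^*(s,a)-Q^*(s,a')}$, whereas the lemma as printed sets $\delta_0=\max_{s,a,a':Q^*(s,a)\neq Q^*(s,a')}\abs{Q^*(s,a)-Q^*(s,a')}^{-1}$, which is the \emph{reciprocal} of that minimum. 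Since \cref{ass: margin} with $\alpha=\infty$ demands $\pr_{s\sim d^\pi}(0<\Delta(s)\le\delta)=0$ for every $\delta<\delta_0$, a larger $\delta_0$ is a strictly stronger claim, and the printed constant fails whenever the minimum gap is below one; the statement is evidently missing an outer inverse (compare the analogous constant in \cref{lemma:linearmargin}, where the outer ${}^{-1}$ is present). Your choice of $\delta_0$ is the one for which the lemma is actually true, so treat this as a typo in the statement rather than a defect in your argument.
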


\section{Fast Rates for Linear Fitted $Q$-Iteration}\label{sec:fqi}

In this section, we show that by applying FQI with a linear function class, we can obtain an estimator $\hat f_K$ that has a pointiwse guarantee as in \cref{eq: infty convergence} with $a_n=O(1/\sqrt{n})$ . Thus, \cref{thm: fast rate} ensures that we will obtain regret of order $O(n^{-(1+\alpha)/2})$ when we execute the greedy policy ${\pi}_{\hat f_K}$. When $\alpha=1$, as in \cref{lemma:linearmargin}, this means we have regret $O(1/n)$, just as was observed empirically in \cref{sec:simpleexample}.

\paragraph{Bounded Linear Class and Assumptions}

Assume we are given a feature map $\phi: \mathcal{S}\times \mathcal{A} \rightarrow \mathbb{R}^d$ with $\|\phi(s,a)\|\le 1$. Throughout \cref{sec:fqi}, we will consider the hypothesis class $\mathcal{F}$ that is linear in these features with bounded coefficients:
\begin{align}\label{eq:Flinear}
    \mathcal{F} = \{ w\tr \phi(s,a): w\in \mathbb{R}^d,\, \|w\|\le B\}.
\end{align}
Here we introduce two assumptions to ensure the convergence of FQI estimators, which are commonly seen in literature (see Chapter 15 of \citep{agarwal2019reinforcement} for a review).

Our first assumption is that our function class $\mathcal{F}$ is closed under the Bellman operator $\mathcal{T}$.
\begin{assumption}[Completeness] \label{ass: completeness}
For any $f\in\mathcal{F}$, $\mathcal{T}f\in \mathcal{F}$.
\end{assumption}
Since $Q^*$ is the fixed point of $\mathcal{T}$, \cref{ass: completeness} directly implies realizability, i.e., $Q^*\in \mathcal{F}$.

As a special example, in the case of linear MDPs, 
for any $f\in \mathcal{F}$, we have
\begin{align*}
    \mathcal{T}f(s,a) &=  r(s,a) + \gamma \int_{\mathcal{S}} \max_{a'\in\mathcal{A}} f(s',a') dP(s'\mid s,a)\\
    &=  \theta\tr\phi(s,a) + \gamma \int_{\mathcal{S}} \max_{a'\in\mathcal{A}} f(s',a')    d\nu(s') \tr \phi(s,a)\\
    &=  \prns{\theta + \gamma \int_{\mathcal{S}} \max_{a'\in\mathcal{A}} f(s',a')d\nu(s')} \tr \phi(s,a),
\end{align*}
so $\mathcal{T}f$ is a linear function in $\phi$ as well.
Moreover, if $\max\{\|\nu(\mathcal{S})\|, \|\theta\|\}\le C_{\max}$
and $\gamma C_{\max} <1$, it is easy to see that $\mathcal{F} = \{ w\tr \phi(s,a): w\in \mathbb{R}^d,\, \|w\|\le C_{\max}/(1-\gamma C_{\max})\}$ satisfies \cref{ass: completeness}.

Our second assumption is to ensure sufficient feature coverage in our data set.
This assumption is commonly required to guarantee the convergence of ordinary least squares estimators \citep{wang2021statistical}.
\begin{assumption}[Feature Coverage] \label{ass: min eigen}
There exists a constant $\lambda_0>0$ such that
\begin{align*}
    \lambda_{\min} (\expect_{s,a\sim \mu_b} [\phi(s,a)\phi(s,a)\tr]) \ge \lambda_0.
\end{align*}
\end{assumption}

\paragraph{Uniform Convergence of the Linear FQI Estimator}
We now apply the FQI algorithm as is stated in \cref{section: fqi} to get $\hat{f}_K$ with $\mathcal{F}$.
Here we elaborate on how we regress $y_i$ on $(s_i,a_i)$ to obtain $\hat f_k$ in step \ref{alg: step 2b}\edit{; the procedure is slightly different from \cref{eq:fqils} to simplify the analysis.} 

For any $f\in \mathcal{F}$, by \cref{ass: completeness} there exists $w_f$ with $\norm{w_f} \le B$ such that $\mathcal{T}f = w_f\tr\phi$.
Define the empirical design matrix
\begin{align*}
   \hat{\Sigma} = \sum_{i=1}^n \phi(s_i, a_i) \phi(s_i, a_i)\tr.
\end{align*}
If  $\lambda_{\min}(\hat{\Sigma}) >0$, let $\hat{w}_f'$ be the OLS regressor
\begin{align*}
 \hat{w}_f' &=  \arg\min_{w\in \mathbb{R}^d} \sum_{i = 1}^n \prns{w\tr\phi(s_i,a_i) - r_i - \gamma  \max_{a'\in\mathcal{A}} f\prns{s_i',a'}}^2\\
 &=  \hat{\Sigma}^{-1}\prns{\sum_{i=1}^n \phi(s_i, a_i) \prns{r_i + \gamma  \max_{a'\in\mathcal{A}} f(s_i',a')}} , 
\end{align*}
and otherwise $\hat{w}'_f = 0$. 
\edit{Note that which value we set to $\hat{w}'_f$ when $\lambda_{\min}(\hat{\Sigma}) =0$ does not really matter under \cref{ass: min eigen} since such event happens with very low probably (see "bounding (a)" in the proof of \cref{lemma: convergence of T operator}). We just arbitrarily set it to $0$.} 
Finally, let $\hat{w}_f$ be the projection of $\hat{w}_f'$ on to a Euclidean ball $\mathcal{B}(0,B)$. 
We then set 
$$\hat{f}_k = \hat{w}_{\hat{f}_{k-1}}\tr \phi.$$

The following Lemma shows that with high probability, our one-step estimator $\hat{w}_f\tr\phi$ converges quickly to $\mathcal{T}f$, uniformly over all $(s,a)\in \mathcal{S}\times \mathcal{A}$ and all functions $f\in \mathcal{F}$.
In proving this Lemma, we leverage theoretical tools from matrix concentration and empirical process.
\begin{lemma} \label{lemma: convergence of T operator}
    Assume \cref{ass: completeness,ass: min eigen} hold and $|r_t| \le M$ for $t = 1, 2, \dots$. 
For any $\delta>0$, $\hat{w}_f$ estimated from the above procedure satisfies
\begin{align*}
    \pr\prns{\sup_{f\in\mathcal{F}}\norm{\hat{w}_f\tr\phi - \mathcal{T}f}_{\infty} \ge \delta }\le  
    6d\exp\prns{- \frac{\lambda_0^2}{5184 d^2 (M+B)^2 } n\delta^2}.
\end{align*}
\end{lemma}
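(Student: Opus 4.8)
The plan is to bound the uniform error $\sup_{f\in\mathcal F}\norm{\hat w_f\tr\phi - \T f}_\infty$ by decomposing it into a term involving the empirical design matrix $\hat\Sigma$ and a term that is an empirical process over $f\in\mathcal F$. First I would write, whenever $\lambda_{\min}(\hat\Sigma)>0$, the estimator as $\hat w_f' = \hat\Sigma^{-1}\sum_i \phi(s_i,a_i)\prns{r_i+\gamma\max_{a'}f(s_i',a')}$. Since $\T f = w_f\tr\phi$ with $\norm{w_f}\le B$ by \cref{ass: completeness}, and since $\expect[r_i+\gamma\max_{a'}f(s_i',a')\mid s_i,a_i] = \T f(s_i,a_i) = \phi(s_i,a_i)\tr w_f$, I would write the noise variables $\xi_i^f = r_i+\gamma\max_{a'}f(s_i',a') - \T f(s_i,a_i)$, which are mean-zero conditional on $(s_i,a_i)$ and bounded by roughly $M+B$ in absolute value. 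Then $\hat w_f' - w_f = \hat\Sigma^{-1}\sum_i\phi(s_i,a_i)\xi_i^f$, and since $\magd{\phi(s,a)}\le 1$ I can bound $\norm{\hat w_f'\tr\phi - \T f}_\infty \le \norm{\hat w_f' - w_f} \le \lambda_{\min}(\hat\Sigma)^{-1}\norm{\sum_i\phi(s_i,a_i)\xi_i^f}$. The projection step only helps: projecting $\hat w_f'$ onto $\mathcal B(0,B)$ cannot increase the distance to $w_f$ since $\norm{w_f}\le B$, so the same bound holds for $\hat w_f$.

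Next I would handle the two random pieces. For the design matrix, I would use a matrix concentration inequality (matrix Bernstein or Chernoff for the sum of rank-one terms $\phi\phi\tr$) to show that $\lambda_{\min}(\hat\Sigma)\ge \tfrac12 n\lambda_0$ with high probability, using \cref{ass: min eigen} that $\lambda_{\min}(\expect[\phi\phi\tr])\ge\lambda_0$. This gives $\pr(\lambda_{\min}(\hat\Sigma) < n\lambda_0/2)\le d\exp(-\Omega(n\lambda_0^2))$, contributing one of the $d$ factors and part of the exponent. On the event $\lambda_{\min}(\hat\Sigma)\ge n\lambda_0/2$, the error bound becomes $\norm{\hat w_f\tr\phi-\T f}_\infty \le \tfrac{2}{n\lambda_0}\norm{\sum_i\phi(s_i,a_i)\xi_i^f}$, so it remains to control $\sup_{f\in\mathcal F}\norm{\sum_i\phi(s_i,a_i)\xi_i^f}$ uniformly in $f$.

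The empirical-process piece is where the real work lies. For fixed $f$, the vector $\sum_i\phi(s_i,a_i)\xi_i^f$ is a sum of bounded mean-zero terms, so a vector Bernstein/Hoeffding bound gives a tail of the form $\exp(-\Omega(\delta^2 n/(M+B)^2))$ for its norm. The obstacle is making this uniform over the infinite class $\mathcal F$, because $\xi_i^f$ depends on $f$ through $\max_{a'}f(s_i',a')$. Here I would exploit that $\mathcal F$ is a ball of radius $B$ in $\R d$, so $f\mapsto\max_{a'}f(s',a')$ is Lipschitz in the parameter $w$ (uniformly in $s'$, since $\magd\phi\le1$ and the max of Lipschitz functions is Lipschitz), and therefore the whole quantity is Lipschitz in $w$. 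This lets me pass to a covering-number argument: I would cover $\mathcal B(0,B)$ by an $\epsilon$-net of cardinality $(3B/\epsilon)^d$, apply the pointwise concentration bound at each net point via a union bound, and control the discretization error by the Lipschitz property. Balancing $\epsilon$ and collecting the $d\log(B/\epsilon)$ factor against the exponent is what produces the explicit $6d$ prefactor and the constant $5184$ in the denominator $5184\,d^2(M+B)^2$. I expect the covering/Lipschitz-discretization step — specifically getting the dimension dependence to come out as $d^2$ in the exponent and cleanly combining it with the matrix-concentration event — to be the main technical obstacle, whereas the algebraic reduction and the per-point Bernstein bound are routine.
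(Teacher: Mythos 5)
Your overall decomposition is exactly the paper's: write $\hat w_f'-w_f=\hat\Sigma^{-1}\sum_i\phi(s_i,a_i)\epsilon_i^f$ with conditionally mean-zero, $(M+B)$-bounded noise, use matrix Chernoff to get $\lambda_{\min}(\hat\Sigma)\ge n\lambda_0/2$ up to probability $d\exp(-n\lambda_0/8)$, note the projection onto $\mathcal B(0,B)$ is a contraction toward $w_f$, and reduce everything to a uniform bound on $\sup_{f\in\mathcal F}\|\sum_i\phi(s_i,a_i)\epsilon_i^f\|$. The one place you diverge is the uniformity step, and it matters quantitatively. The paper does not use a single-scale $\epsilon$-net with a union bound; it symmetrizes (Pollard's Theorem 2.2), splits the vector norm into its $d$ coordinates, and then applies a chaining/maximal inequality under the Orlicz function $\Psi(x)=\tfrac15\exp(x^2)$ (Pollard's Theorem 3.5), bounding the entropy integral $J=9\int_0^{\sqrt n(M+B)}\sqrt{\log D(\delta,\mathbf H)}\,d\delta\le 18\sqrt{nd}(M+B)$ via the Lipschitz reduction $\norm{\mathbf h^f-\mathbf h^g}\le 2\gamma\sqrt n\norm{f-g}_\infty$ and the $d\log(1+8\gamma B\sqrt n/\delta)$ covering bound for the ball. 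This yields a genuinely sub-Gaussian tail $5\exp\prns{-\delta^2/(1296\,nd(M+B)^2)}$ with a \emph{constant} prefactor, which is what allows the final statement to hold for all $\delta>0$ with prefactor $6d$.

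The gap in your route is the prefactor coming from the net. With a single-scale net of mesh $\epsilon\propto\lambda_0\delta/\gamma$ you incur a multiplicative $(CB/\delta)^{d}$ in front of the pointwise Bernstein tail, so you would prove $\pr(\cdot\ge\delta)\le (CB/\delta)^{d}\exp(-cn\delta^2)+d\exp(-n\lambda_0/8)$. The term $d\log(CB/\delta)$ is not dominated by $c'n\delta^2$ uniformly over the nontrivial range of $\delta$ (there is a window around $\delta\asymp d\sqrt{\log d}/\sqrt n$ versus $\sqrt{d\log n/n}$ where it is not absorbed), so you cannot recover the lemma exactly as stated; you would instead get the bound only for $\delta\gtrsim\sqrt{d\log n/n}$, i.e., an extra $\sqrt{\log n}$ in the achievable $a_n$. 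That weaker version still feeds into \cref{thm: fast rate} and gives regret $O((\log n/n)^{(\alpha+1)/2})$ — qualitatively the same conclusion, and indeed what the paper itself settles for in the MSBO analysis — but the log-free rate and the clean $6d$ constant in \cref{lemma: convergence of T operator} specifically require replacing your union bound over a net with the chaining/entropy-integral argument.
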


Using \cref{lemma: convergence of T operator}, we can then obtain the following convergence guarantee for FQI.
\begin{theorem} \label{lemma: l infty fqi}
Assume \cref{ass: completeness,ass: min eigen} hold and $|r_t| \le M$ for $t = 1, 2, \dots$.
Set $a_n = \frac{144 d (M+B)}{(1-\gamma)\lambda_0 \sqrt{ n}}$.
Then, for any $K\ge \frac{\log(\lambda_0^2 n/(72 d)^2)}{2\log(1/\gamma)}$ and
any $\delta \ge a_n$,  we have
\begin{align*}
    \pr(\|Q^*- \hat f_K\|_{\infty}\ge \delta) \le 6d\exp(-\delta^2 /a_n^2).
\end{align*}
\end{theorem}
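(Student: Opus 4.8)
The plan is to combine the one-step uniform contraction established in \cref{lemma: convergence of T operator} with the $\gamma$-contraction property of the Bellman optimality operator $\T$, propagating the error across the $K$ iterations and then choosing $K$ large enough that the initial-error term is negligible. First I would set up the recursion. Write $\epsilon_k = \|\hat f_k - Q^*\|_\infty$. Since $Q^* = \T Q^*$ and $\hat f_k = \hat w_{\hat f_{k-1}}\tr\phi$, I would split the error via the triangle inequality as
\begin{align*}
\|\hat f_k - Q^*\|_\infty \le \|\hat w_{\hat f_{k-1}}\tr\phi - \T\hat f_{k-1}\|_\infty + \|\T\hat f_{k-1} - \T Q^*\|_\infty.
\end{align*}
The first term is the one-step estimation error controlled by \cref{lemma: convergence of T operator}; the second is bounded by $\gamma\,\epsilon_{k-1}$ because $\T$ is a $\gamma$-contraction in $\|\cdot\|_\infty$ (which follows from $|\max_{a'}f(s',a') - \max_{a'}g(s',a')| \le \|f-g\|_\infty$). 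Note that since $\hat f_{k-1}\in\F$ by construction — each $\hat f_k = \hat w_{\hat f_{k-1}}\tr\phi$ with $\|\hat w\|\le B$ — the supremum over $f\in\F$ in the lemma indeed covers $f=\hat f_{k-1}$, so I may apply the one-step bound on the good event simultaneously for every iteration.

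Next I would define the good event. Let $\eta$ be the per-step error level I am willing to tolerate, and consider the event $E = \{\sup_{f\in\F}\|\hat w_f\tr\phi - \T f\|_\infty \le \eta\}$. On $E$, the recursion gives $\epsilon_k \le \eta + \gamma\epsilon_{k-1}$, and unrolling yields
\begin{align*}
\epsilon_K \le \frac{\eta}{1-\gamma} + \gamma^K \epsilon_0 \le \frac{\eta}{1-\gamma} + \gamma^K \cdot 2 Q_{\max}^{\mathrm{like}},
\end{align*}
where the initial error $\epsilon_0$ is bounded by a constant (with $\hat f_0=0$, $\epsilon_0=\|Q^*\|_\infty$, which under $|r_t|\le M$ is at most $M/(1-\gamma)$, and also at most $B$ since $Q^*\in\F$). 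The key quantitative choices are then: take $\eta = (1-\gamma)\delta/2$ so that the geometric-sum term contributes at most $\delta/2$, and choose $K$ large enough that $\gamma^K \epsilon_0 \le \delta/2$. Matching the stated threshold $K \ge \frac{\log(\lambda_0^2 n/(72d)^2)}{2\log(1/\gamma)}$ against $a_n = \frac{144 d(M+B)}{(1-\gamma)\lambda_0\sqrt n}$ is a bookkeeping exercise: this value of $K$ forces $\gamma^K$ down to roughly $72d/(\lambda_0\sqrt n)$, which when multiplied by the $O(M+B)$ bound on $\epsilon_0$ lands below $a_n/2 \le \delta/2$.

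Finally I would convert the deterministic recursion back into the probability statement. Applying \cref{lemma: convergence of T operator} with $\delta$ there set to $\eta = (1-\gamma)\delta/2$ gives
\begin{align*}
\pr(E^c) \le 6d\exp\prns{-\frac{\lambda_0^2}{5184\, d^2 (M+B)^2} n \cdot \frac{(1-\gamma)^2\delta^2}{4}} = 6d\exp\prns{-\delta^2/a_n^2},
\end{align*}
where the last equality is exactly the definition of $a_n$ (the constant $5184\cdot 4 = 144^2$ is engineered for this). On $E$, combined with the choice of $K$, we have $\|Q^* - \hat f_K\|_\infty \le \delta/2 + \delta/2 = \delta$, so the failure event $\{\|Q^*-\hat f_K\|_\infty \ge \delta\}$ is contained in $E^c$, giving the claimed bound. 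I expect the main obstacle to be the constant-matching: verifying that the stated $K$ and $a_n$ make both the $\eta/(1-\gamma)$ term and the $\gamma^K\epsilon_0$ term each fall below $\delta/2$ simultaneously, while keeping the concentration exponent in \cref{lemma: convergence of T operator} aligned with $a_n$. The conceptual core — contraction plus uniform one-step control applied on a single good event — is routine, but the algebra tying $K$, $\eta$, and $a_n$ together cleanly requires care. One subtlety worth flagging is that the uniform (over $\F$) nature of \cref{lemma: convergence of T operator} is essential: because the iterates $\hat f_{k-1}$ are data-dependent, a merely pointwise-in-$f$ one-step bound would not let me apply the estimate along the random trajectory of iterates without a union bound over iterations, whereas the sup over $\F$ handles all $K$ steps on one event at no extra cost.
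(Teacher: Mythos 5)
Your proposal is correct and follows essentially the same route as the paper's proof: the triangle-inequality decomposition into the one-step estimation error plus the $\gamma$-contraction of $\T$, unrolled over $K$ iterations with the uniform-over-$\F$ event from \cref{lemma: convergence of T operator} controlling all data-dependent iterates at once, the choice of $K$ killing the $\gamma^K\|Q^*\|_\infty \le \gamma^K M/(1-\gamma)$ initial-error term below $a_n/2$, and the constant $144^2 = 4\cdot 5184$ matching the exponent to $\delta^2/a_n^2$. The constant bookkeeping you flag as the main obstacle checks out exactly as you describe.
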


The key to showing \cref{lemma: l infty fqi} is establishing that (surely)
\begin{align*}
    \|Q^*- \hat f_K\|_{\infty} \le \sum_{t=0}^{K-1} \gamma^t\|\hat f_{K-t} - \mathcal{T} \hat f_{K-t-1}\|_{\infty} +  \frac{\gamma^K M}{1-\gamma}.
\end{align*}
This means that the estimation error of $\hat f_K$ can be controlled by two terms: one is a weighted sum of one-step estimation errors, which can be bounded using \cref{lemma: convergence of T operator}, and the other is a diminishing term as we increase the number of iterations.
Therefore, as long as our total number of iterations $K$ is large enough, with high probability $\hat f_K$ converges to $Q^*$ uniformly.

\paragraph{Fast Rates for Linear Fitted $Q$-Iteration}\label{sec:linear_fast}
Since uniform convergence is a stronger condition than pointwise convergence, \cref{lemma: l infty fqi} shows that $\hat{f}_K$ satisfies \cref{cond:ptwse} with $C = 6d$ and $a_n = \frac{144 d (M+B)}{(1-\gamma)\lambda_0 \sqrt{ n}}$.
Then, combined with \cref{cor:finitemargin,cor:infinitemargin} it immediately implies fast rates for linear fitted $Q$-interation. We summarize below. (For variable/large $\alpha$, we should apply \cref{thm: fast rate} to get the right trade off between the terms.)
\begin{corollary}[Fast Rates for Linear Fitted $Q$-Iteration] \label{thm: fast rate fqi}
    Suppose \cref{ass: margin} holds with $\Pi=\Pi_\F$, \cref{ass: min eigen,ass: completeness} hold, and $|r_t| \le M$ for $t = 1, 2, \dots$. 
    When $K\ge \frac{\log(\lambda_0^2 n/(72 d)^2)}{2\log(1/\gamma)}$,
    for $\hat{\pi}=\pi_{\hat{f}_K}$, we have:
\begin{enumerate}
       \item if $\alpha<\infty$,
       \begin{align*}
        \expect_\D[V^* - V^{\hat{\pi}}] \le \frac{288^{\alpha+1}\prns{1  +   6d c(\alpha)} }{(1-\gamma) \delta_0^{\alpha}}  \prns{\frac{ d (M+B)}{(1-\gamma)\lambda_0 }}^{\alpha+1} n^{-\frac{\alpha+1}{2}};
    \end{align*} 
       \item \edit{if $\alpha=\infty$,} %
       \begin{align*}
        \expect_\D[V^* - V^{\hat{\pi}}] \le \frac{12Md}{(1-\gamma)^2} \exp\prns{-  \prns{\frac{(1-\gamma)\lambda_0 \delta_0}{576d(M+B)}}^2 n }.
    \end{align*} 
\end{enumerate} 
\end{corollary}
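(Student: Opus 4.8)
The plan is to obtain this corollary purely by composing \cref{lemma: l infty fqi} with \cref{cor:finitemargin,cor:infinitemargin}; no fundamentally new analysis is required, only verification of the hypotheses and careful bookkeeping of constants. The substantive content of the fast-rate phenomenon already lives in \cref{thm: fast rate} and in the uniform convergence guarantee \cref{lemma: l infty fqi}, and this corollary merely specializes them.

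First I would confirm that the estimator $\hat f_K$ produced by linear FQI satisfies the requirements of the two regret corollaries. The key observation is that the uniform bound of \cref{lemma: l infty fqi}, namely $\pr(\norm{Q^*-\hat f_K}_\infty \ge \delta) \le 6d\exp(-\delta^2/a_n^2)$ for all $\delta \ge a_n$ with $a_n = \frac{144 d(M+B)}{(1-\gamma)\lambda_0\sqrt n}$, is strictly stronger than the pointwise requirement of \cref{cond:ptwse}: dropping the supremum over $(s,a)$ only shrinks the event $\{\abs{\hat f_K(s,a)-Q^*(s,a)}\ge\delta\}$, so the same tail bound holds for each fixed $(s,a)$. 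Hence \cref{cond:ptwse} holds with $C = 6d$ and the stated $a_n$. I also need $\pi_{\hat f_K}\in\Pi$ almost surely; since the FQI procedure projects each $\hat w_f$ onto the ball $\mathcal B(0,B)$, we have $\hat f_K\in\F$ surely, so $\pi_{\hat f_K}\in\Pi_\F=\Pi$ surely, as assumed. The iteration-count hypothesis $K\ge \frac{\log(\lambda_0^2 n/(72 d)^2)}{2\log(1/\gamma)}$ is exactly what \cref{lemma: l infty fqi} demands, so both corollaries may be applied.

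For the case $\alpha<\infty$, I would substitute $C=6d$ and $a_n$ into \cref{cor:finitemargin}. The only simplification is arithmetic: $a_n^{\alpha+1} = \prns{\frac{144 d(M+B)}{(1-\gamma)\lambda_0}}^{\alpha+1} n^{-(\alpha+1)/2}$, and the prefactor $2^{\alpha+1}$ combines with $144^{\alpha+1}$ via $2\cdot 144 = 288$ to yield $288^{\alpha+1}$, giving exactly the claimed bound with $(1+6d\,c(\alpha))$ in the numerator. For the case $\alpha=\infty$, I would apply \cref{cor:infinitemargin}, which requires $a_n < \delta_0/2$; solving $\frac{144 d(M+B)}{(1-\gamma)\lambda_0\sqrt n} < \delta_0/2$ for $n$ reproduces precisely the stated threshold $n \ge \prns{\frac{288 d(M+B)}{(1-\gamma)\lambda_0\delta_0}}^2$. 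Since $\abs{r_t}\le M$ and $Q^*$ discounts these rewards, $\norm{Q^*}_\infty \le M/(1-\gamma)$, so I may take $Q_{\max}=M/(1-\gamma)$; the leading constant $\frac{2Q_{\max}C}{1-\gamma}$ then becomes $\frac{12Md}{(1-\gamma)^2}$, and the exponent $\delta_0^2/(4a_n)^2 = \delta_0^2/(16 a_n^2)$ simplifies via $16\cdot 144^2 = 576^2$ to $\prns{\frac{(1-\gamma)\lambda_0\delta_0}{576 d(M+B)}}^2 n$, matching the statement.

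I do not expect a genuine obstacle: the argument is a composition of results already in hand. The only points requiring care are the inheritance of \cref{cond:ptwse} from the uniform guarantee, where one must verify that the per-$(s,a)$ tail is dominated by the uniform tail, and the routine but error-prone constant tracking, in particular the $288$ and $576$ consolidations and the identification $Q_{\max}=M/(1-\gamma)$.
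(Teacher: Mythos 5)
Your proposal is correct and is exactly the paper's argument: the paper gives no separate appendix proof for this corollary, but obtains it in the main text by noting that the uniform bound of \cref{lemma: l infty fqi} implies \cref{cond:ptwse} with $C=6d$ and $a_n=\frac{144d(M+B)}{(1-\gamma)\lambda_0\sqrt{n}}$, that the projection onto $\mathcal{B}(0,B)$ ensures $\hat f_K\in\F$ so $\pi_{\hat f_K}\in\Pi_\F$, and then substituting into \cref{cor:finitemargin,cor:infinitemargin}. Your constant tracking ($2\cdot 144=288$, $16\cdot 144^2=576^2$, $Q_{\max}=M/(1-\gamma)$) checks out.
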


\subsection{Tabular MDP as a Special Case}\label{sec:tabuular }

The tabular setting ($\abs{\S}<\infty$) is a special case of the linear MDP
since we can take $$\phi(s,a) = (\ind\{(s,a)=(s', a')\})_{(s',a')\in \mathcal{S}\times \mathcal{A}},\quad d = \abs{\mathcal{S}}\abs{\mathcal{A}}.$$ 
Moreover, we can satisfy \cref{ass: min eigen} with $\lambda_0 = \min_{(s,a)\in \mathcal{S}\times \mathcal{A}} \mu_b(s,a)$, and given $M$ s.t. $|r_{t}|\leq M$, we can satisfy \cref{ass: completeness} with $B=\sqrt{|S||A|}(1-\gamma)^{-1}M$. Thus, \cref{lemma: l infty fqi} gives us the bound of the model based estimator in a tabular case. This shows that with probability $1-\delta$, $\|\hat f_K-Q^{*}\|_{\infty}=\frac{144 \abs{\mathcal{S}}\abs{\mathcal{A}} (M+B)}{(1-\gamma)\lambda_0 }\sqrt{\frac{\log(6\abs{\mathcal{S}}\abs{\mathcal{A}}/\delta)}{n}}$. A typical way (\eg, \citep{SinghSatinderP1994Aubo}) to translate this into the regret of $\hat\pi=\pi_{\hat f_K}$ is to note $V^{*}-V^{\bar \pi }\leq \frac{1}{1-\gamma}\|\hat f_K-Q^{*}\|_{\infty}$. Integrating the tail gives
\begin{align*}
\E_\D[V^{*}-V^{\bar \pi }]\leq\frac{432\sqrt{\pi} \abs{\mathcal{S}}^2\abs{\mathcal{A}}^2(M+B)}{(1-\gamma)^2\lambda_0 \sqrt{n}}.
\end{align*}

Compared to this, our \cref{lemma:tabularmargin,lemma: l infty fqi,thm: fast rate fqi} together give that
\begin{align*}
 \expect_\D[V^* - V^{\hat{\pi}}] \le \frac{12M\abs{\mathcal{S}}\abs{\mathcal{A}}}{(1-\gamma)^2} \exp\prns{-  \prns{\frac{(1-\gamma)\lambda_0 \delta_0}{576 \abs{\mathcal{S}}\abs{\mathcal{A}}(M+B)}}^2 n },
\end{align*} 
where $\delta_0 = \max_{s,a, a':Q^*(s,a)\neq Q^*(s,a')}\abs{Q^*(s,a)-Q^*(s,a')}^{-1}$.
The above regret bound vanishes \emph{exponentially}, much faster than the usual $O(1/\sqrt{n})$ result.

\section{Fast Rates for Modified Bellman Residual Minimization}\label{sec:sbeed}

Modified Bellman Residual Minimization (BRM) is a common offline $Q$-function estimation method \citep{antos2008learning}, which approximates the Bellman error by introducing another maximization problem, thus avoiding the need to iterate. The original BRM was for offline policy evaluation (estimate $Q^\pi$ for a given $\pi$); recently \citet{ChenJinglin2019ICiB} adapted it to offline policy learning (estimate $Q^*$) in a method called MSBO. They establish the convergence of the Bellman residual errors of MSBO when the hypothesis classes are finite ($\abs{\F}<\infty$). %
In this section, we show the convergence of MSBO in terms of uniform error to $Q^*$ for a linear function class. Using our results, we conclude that MSBO enjoys fast rates as well, similarly to FQI. 

Given, a class $\F_w\subseteq[\S\times\A\to\Rl]$ and $\zeta>0$, MSBO is defined as follows: 
\begin{align*}
    \hat f\in \argmin_{q \in \mathcal{F}}\max_{w \in \mathcal{F}_w}\sum_{i=1}^{n} \prns{\prns{r_i-q(s_i,a_i)+\max_{a'\in \Acal} q(s'_i,a')}w(s_i,a_i)-\zeta w^2(s_i,a_i)}.
\end{align*}
Note MSBO was originally proposed using $\zeta=0.5$. We consider general $\zeta>0$. 

Here, given a feature map $\phi:\S\times\A\in\R d$ with $\|\phi(s,a)\|\leq 1$ we consider
\begin{align*}
\Fcal &=\{\theta^{\top}\phi(s,a):\|\theta^{\top}\phi(s,a)\|_{\infty}\leq M', \theta \in \mathbb{R}^d\},\\
    \Fcal_w &=\{\theta^{\top}\phi(s,a):\|\theta^{\top}\phi(s,a)\|_{\infty}\leq M', \theta \in \mathbb{R}^d\}.
\end{align*}

\begin{theorem}\label{thm:msbo}
Suppose $|r_t|\leq M$ for $t=1,2,\ldots$ and $M'=(1-\gamma)^{-1}M$. Moreover, assume $Q^{*}\in \Fcal$ (realizability), $(\Tcal-I)\Fcal\subset \Fcal_w$ (modified completeness), and (modified feature coverage) 
\begin{align*}
    \lambda_{\min}(\E_{(s,a)\sim \mu_b}[(\Tcal -I)\phi(s,a)\{(\Tcal-I)\phi(s,a)\}^{\top}])\geq \lambda'_0. 
\end{align*}
Then, there exists a universal constant $c>0$, such that,
letting \break$a_n=c(\sqrt{d}+M'\sqrt{M'^2/\zeta+M'+\zeta+1}/\lambda'_0)\sqrt{\log n/n}$, we have for all $\delta\geq a_n$,
\begin{align*}
     \mathbb{P}( \|\hat f-Q^{*}\|_{\infty}\geq \delta)\leq \exp\prns{-\delta^2/a^2_n},
\end{align*}
\end{theorem}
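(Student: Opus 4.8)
The plan is to reduce the sup-norm guarantee to a bound on the coefficient error $\norm{\hat\theta-\theta^{*}}$, where $\hat f=\hat\theta^\top\phi$ and $Q^{*}=\theta^{*\top}\phi$ (the latter by realizability). Since $\norm{\phi(s,a)}\le 1$ gives $\norm{\hat f-Q^{*}}_\infty\le\norm{\hat\theta-\theta^{*}}$, it suffices to show $\norm{\hat\theta-\theta^{*}}=O(a_n)$ with the stated sub-Gaussian tail. The engine is the observation that MSBO's inner maximization, after solving the quadratic in the coefficient of $w=\beta^\top\phi$, is a $\widehat\Sigma_n^{-1}$-reweighted projected Bellman residual: writing $u_i(q)=r_i-q(s_i,a_i)+\gamma\max_{a'}q(s_i',a')$ (so that $\expect[u_i(Q^{*})\mid s_i,a_i]=(\mathcal{T}-I)Q^{*}(s_i,a_i)$), $\bar u_n(\theta)=\frac1n\sum_i u_i(\theta^\top\phi)\phi(s_i,a_i)$ and $\widehat\Sigma_n=\frac1n\sum_i\phi(s_i,a_i)\phi(s_i,a_i)^\top$, the MSBO objective equals $\tfrac{1}{4\zeta}\bar u_n(\theta)^\top\widehat\Sigma_n^{-1}\bar u_n(\theta)$ whenever the unconstrained maximizer lies in $\mathcal{F}_w$, which holds up to the range $M'$ by modified completeness. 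Thus $\hat\theta$ minimizes an empirical quadratic form whose population analogue is $P(\theta)=m(\theta)^\top\Sigma^{-1}m(\theta)$ with $m(\theta)=\expect_{\mu_b}[(\mathcal{T}-I)(\theta^\top\phi)\,\phi]$ and $\Sigma=\expect_{\mu_b}[\phi\phi^\top]$.

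The feature-coverage assumption supplies exactly the curvature needed to convert a small objective into a small parameter error. Applying modified completeness coordinatewise, $(\mathcal{T}-I)\phi=A\phi$ for some $A\in\mathbb{R}^{d\times d}$. Linearizing the nonlinear operator around $\theta^{*}$ — legitimate where the greedy action of $\theta^\top\phi$ agrees with that of $Q^{*}$ — gives $m(\theta)-m(\theta^{*})=\Sigma A^\top(\theta-\theta^{*})$, so that, using $m(\theta^{*})=\expect_{\mu_b}[(\mathcal{T}-I)Q^{*}\,\phi]=0$ by realizability, $P(\theta)=(\theta-\theta^{*})^\top A\Sigma A^\top(\theta-\theta^{*})=(\theta-\theta^{*})^\top\expect_{\mu_b}[(\mathcal{T}-I)\phi\,\{(\mathcal{T}-I)\phi\}^\top](\theta-\theta^{*})\ge\lambda_0'\norm{\theta-\theta^{*}}^2$. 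This is precisely why the coverage condition is stated on the Gram matrix of the residual features $(\mathcal{T}-I)\phi$: it makes $P$ an $\lambda_0'$-strongly convex surrogate along $\theta-\theta^{*}$. A subtlety is that the linearization, hence this identity, holds exactly only off the set where the two greedy actions disagree; I would argue this set carries vanishing mass as $\hat\theta\to\theta^{*}$ or absorb it into the error terms.

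The remaining and most delicate step is to show $P(\hat\theta)=O(a_n^2)$ at the \emph{fast} $1/n$ rate rather than the $1/\sqrt n$ rate a crude uniform bound would give. Realizability makes the empirical moment at the truth mean-zero, so $\bar u_n(\theta^{*})=\frac1n\sum_i\xi_i$ is an average of independent bounded (order $M'$) mean-zero vectors, and vector Bernstein / matrix concentration give sub-Gaussian control of $\norm{\bar u_n(\theta^{*})}_{\Sigma^{-1}}$; these fluctuations, converted through the $\lambda_0'$-curvature, are of the order $a_n$ stated, the additive $\sqrt d$ reflecting the dimension of the linear class and the $M'\sqrt{M'^2/\zeta+M'+\zeta+1}/\lambda_0'$ factor the ranges of $u_i$, $w$, and $\zeta w^2$ in the saddle objective together with the coverage conversion. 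The fast rate itself comes from a localized basic-inequality argument: minimality gives $\widehat P(\hat\theta)\le\widehat P(\theta^{*})=\norm{\bar u_n(\theta^{*})}^2_{\widehat\Sigma_n^{-1}}=O(a_n^2)$, and the empirical-to-population deviation of the objective near $\theta^{*}$ is a cross term of size (noise)$\times\norm{\theta-\theta^{*}}$ plus a quadratic noise term, since both $m(\theta)$ and $\bar u_n(\theta)-m(\theta)$ vanish at $\theta^{*}$ to first order. Combining with the strong convexity of the previous paragraph yields the self-improving inequality $\lambda_0'\norm{\hat\theta-\theta^{*}}^2\lesssim a_n\norm{\hat\theta-\theta^{*}}+a_n^2$, whose solution is $\norm{\hat\theta-\theta^{*}}=O(a_n)$.

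Finally I would assemble these pieces with a union bound over an $O(d\log n)$-entropy cover of $\mathcal{F}$, needed to make the basic inequality uniform over the nonsmooth, $\theta$-dependent max term, propagating the sub-Gaussian tail of $\bar u_n(\theta^{*})$ through the (Lipschitz) chain $\bar u_n(\theta^{*})\mapsto\hat\theta-\theta^{*}\mapsto\hat f-Q^{*}$ to obtain $\pr(\norm{\hat f-Q^{*}}_\infty\ge\delta)\le\exp(-\delta^2/a_n^2)$ for all $\delta\ge a_n$; the boundedness $M'=(1-\gamma)^{-1}M$ keeps $Q^{*}$, $\hat f$, and the saddle quantities within range and fixes the constants. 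The main obstacle is the third paragraph: upgrading the naive $n^{-1/4}$ bound to the fast $n^{-1/2}$ rate requires a careful localized empirical-process analysis that simultaneously handles the nonsmooth greedy-max nonlinearity (so the objective is only piecewise quadratic and the curvature identity holds approximately), tracks the $\widehat\Sigma_n^{-1}$ reweighting and the active-constraint behavior of the inner maximization (responsible for the $M'^2/\zeta$, $\zeta$, and $\lambda_0'$ factors), and preserves sub-Gaussian tails throughout.
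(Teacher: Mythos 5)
Your route is genuinely different from the paper's, and as written it has two gaps that are not merely technical. The paper never solves the inner maximization: it starts from the basic inequality $\sup_{w\in\mathcal{F}_w}\Phi_n^\zeta(\hat f;w)\le\sup_{w\in\mathcal{F}_w}\Phi_n^\zeta(Q^*;w)$, bounds the right side by localization around the critical radius of $\mathcal{F}_w$ (realizability makes $\Phi(Q^*;w)=0$), lower-bounds the left side by testing against the single feasible function $\kappa(\mathcal{T}-I)\hat f$ --- feasible precisely because of modified completeness and star-convexity --- and thereby extracts $\mathbb{E}[\{(\mathcal{T}-I)\hat f\}^2]^{1/2}=O(\eta)$, which the coverage condition then converts to a coefficient bound and hence a sup-norm bound. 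Your first gap is the identification of the inner maximum with $\tfrac{1}{4\zeta}\bar u_n(\theta)^\top\hat\Sigma_n^{-1}\bar u_n(\theta)$: the unconstrained maximizer is $\beta=\hat\Sigma_n^{-1}\bar u_n(\theta)/(2\zeta)$, and modified completeness only places $(\mathcal{T}-I)q$ in $\mathcal{F}_w$, not this empirically reweighted and $1/(2\zeta)$-scaled vector, whose sup-norm can exceed $M'$ (especially for small $\zeta$, exactly the regime where your $M'^2/\zeta$ factor matters). Without that identity, the quadratic form $P(\theta)$ is not the population objective your estimator actually minimizes.

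The second gap is the linearization $m(\theta)-m(\theta^*)=\Sigma A^\top(\theta-\theta^*)$ and the resulting identity $P(\theta)=(\theta-\theta^*)^\top\mathbb{E}[(\mathcal{T}-I)\phi\,\{(\mathcal{T}-I)\phi\}^\top](\theta-\theta^*)$. The operator $\mathcal{T}$ contains $\max_{a'}$ and is only piecewise linear in $\theta$; the set where the greedy actions of $\theta^\top\phi$ and of $Q^*$ disagree does not generically have mass vanishing fast enough to be negligible at the rate you need (its mass is exactly what the paper's margin condition quantifies, and for $\alpha=1$ it scales only linearly in $\|\theta-\theta^*\|$), so ``absorbing it into the error terms'' has to be demonstrated, not asserted. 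Finally, you yourself flag the localized basic-inequality step as the main obstacle and leave it unexecuted; that step is the substantive content of the paper's argument (two applications of localized uniform laws at the critical radius of $\mathcal{F}_w$), so the proposal does not yet constitute a proof. The ingredients you list --- mean-zero fluctuations at $\theta^*$, a self-improving inequality $\lambda_0'\|\hat\theta-\theta^*\|^2\lesssim a_n\|\hat\theta-\theta^*\|+a_n^2$, a $d\log n$ cover --- are the right flavor, but the paper's device of testing only against $\kappa(\mathcal{T}-I)\hat f$ is what lets it bypass both problematic identities above, and you would need an analogous device.
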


The assumptions are similar to FQI in the sense that we need realizability, completeness, and feature coverage, but the latter two are slightly different. As in \cref{sec:fqi}, by combining our results (\cref{thm: fast rate,cor:finitemargin,cor:infinitemargin,lemma:linearmargin,lemma:tabularmargin}) with \cref{thm:msbo}, we obtain a fast regret rate for MSBO. Specifically, if $\alpha<\infty$, we obtain a regret rate of $(\log n/n)^{-(\alpha+1)/2}$, which is faster than the rate of the $O(1/\sqrt{n})$ rate in the analysis of \citet{ChenJinglin2019ICiB}.

{\blockedit 
\section{Extension to High Probability and $L_p$-norm Modes of Convergence}\label{sec:lp}

The fast rates discussed so far require point-wise convergence (\cref{cond:ptwse}), which may be strong -- for example, in \cref{sec:fqi} we establish it for FQI with linear functions by actually proving $L_\infty$-convergence, which is stronger than mean-squared-error (MSE) convergence (and also than point-wise convergence, which is incomparable to MSE convergence).  However, when working with general function classes instead of linear models, for example classes described only by metric-entropy conditions, we may only get MSE convergence. Also, our guarantees were on average over data. In this section, we present (slower) fast rates that work with general function classes and hold with high probability.

First, we present an alternative to theorem \cref{thm: fast rate} where we replace \cref{cond:ptwse} with $L_p$-convergence.
For $p\geq1$, a function $g(s,a)$, and an $(s,a)$-distribution $\mu$, define the $L_p$-norm $\|g\|_{p,\mu}^p=\expect_{s,a\sim\mu}\abs{g(s,a)}^p$. For $p=\infty$, we set $\|g\|_{p,\mu}=\|g\|_\infty$ independent of $\mu$.

\begin{theorem}\label{thm: fast rate Lp}
Let $\bar f$ be given.
Suppose \cref{ass: margin} holds. Then, for $\bar\pi=\pi_{\bar f}$, we have for any $p\in[1,\infty]$,
\begin{align*}
V^*-V^{\bar\pi}&\leq \frac2{1-\gamma}
\frac
1%
{\delta_0^{(p-1)\alpha/(p+\alpha)}}
(\|Q^*-\bar f\|_{p,d^{\bar\pi}\times\bar\pi}+\|Q^*-\bar f\|_{p,d^{\bar\pi}\times\pi^*})^{p(1+\alpha)/(p+\alpha)}.
\end{align*}
\end{theorem}

Note this statement holds deterministically for each $\bar f$; data does not make an appearance. 
This is in contrast to \cref{thm: fast rate}, which holds on average over the data. This means that \emph{any} guarantee on $\sup_{\pi\in\Pi,\pi'}\|Q^*-\hat f\|_{p,d^{\pi}\times\pi'}$ for a data-driven $\hat f$ -- whether a bound on its expectation or a high-probability bound -- can be directly translated into an analogous bound on $V^*-V^{\hat\pi}$. 
For example, using $p=\infty$, we can combine \cref{thm: fast rate Lp} with \cref{lemma: l infty fqi} to obtain a high-probability $O(1/n)$ regret guarantee for linear FQI. More interestingly, focusing on $p=2$ (MSE), \cref{thm: fast rate Lp} allows us to extend to general function approximation, beyond just linear functions. In particular, in the next section we explain how we can obtain high-probability bounds on MSE ($p=2$) for FQI with general function approximation, and we compare the resulting rates to those obtained in the previous sections.

\subsection{MSE Convergence of FQI with General Function Approximation}\label{subsec:MSE_FQI}

The $L_2$-convergence of FQI and other $Q^*$-estimators have been extensively investigated \citep[\eg,][among others]{munos2003error,munos2005error,agarwal2019reinforcement,uehara2023refined}. For the sake of completeness, we include a representative result. The following result is an adaptation of the arguments in the proof of lemma 4.4 in \citet{agarwal2019reinforcement} combined with results for nonparametric least squares in order to accommodate infinite function classes. We use a stronger form of \cref{ass: min eigen} apt for general function approximation, not just linear functions \citep{ChenJinglin2019ICiB,munos2005error}. And, we constrain the complexity of $\Fcal$ using its critical radius \citep{wainwright2019high}. 

\begin{assumption}[Concentratability coefficient] \label{cond:conc}
There exists a constant $C_{\mathrm{all}}(\Pi)>0$ such that for any $\pi\in\Pi$, any $t$, and any arbitrary sequence of policies $\pi_0,\dots,\pi_t$, the marginal distribution of $s_t,a_t$ under $s_0\sim d^\pi$, $a_0\sim\pi_0(\cdot\mid s_0)$, $s_1\sim P_s(\cdot\mid s_0,a_0)$, $a_1\sim\pi_1(\cdot\mid s_1)$, $s_2\sim P_s(\cdot\mid s_1,a_1)$, $a_2\sim\pi_2(\cdot\mid s_2)$, \ldots, $a_t\sim\pi_t(\cdot\mid s_t)$ is 
absolutely continuous with respect to $\mu_b$ with Radon-Nykodim derivative bounded by $C_{\mathrm{all}}(\Pi)$. 
\end{assumption}

\begin{assumption}[Critical radius] \label{cond:critrad}
The function class $\Fcal$ is symmetric ($f\in\Fcal\implies -f\in\Fcal$) and $M$-bounded ($f\in\Fcal\implies\|f\|_\infty\leq M$) and $\rho_n$ bounds a solution $\delta$ to the inequality
$$
\frac1{2^n}\sum_{\epsilon\in\{-1,1\}^n}\sup_{f \in \mathrm{conv}(\mathcal{F}),\,\|f\|_{2,\mu_b} \leq \delta }  \left|\frac1n \sum_{i=1}^n \epsilon_i f(s_i,a_i) \right| \leq \frac{\delta}{M}.
$$
\end{assumption}

\begin{lemma}\label{lem:FQI}
Suppose $|r_t|\leq M$ for $t=1,2,\ldots$
and that \cref{ass: completeness,cond:conc,cond:critrad} hold. 
Then, with probability at least $1-\delta$, we have that 
\begin{align*}
    \sup_{\pi\in\Pi,\pi'}\|Q^*-\hat f_K \|_{2,d^\pi\times \pi'}\leq  \frac{1}{(1-\gamma)^{2}}M\sqrt{C_{\mathrm{all}}(\Pi)} \left(c_1\rho_n +c_2\sqrt{\log(K/\delta)/n} \right)+\frac{\gamma^K}{1-\gamma}M,
\end{align*}
where $c_1,c_2$ are universal constants. 
\end{lemma}

For the simplest of function classes (\eg, linear, finite, VC-subgraph, \etc), we have $\rho_n=O(1/\sqrt{n})$ with high probability \citet{wainwright2019high}.
In these cases, the MSE is bounded by $O(1/\sqrt{n})$ in high probability. Combining with \cref{thm: fast rate Lp} with $\alpha=1$ leads to $O(1/n^{2/3})$-regret with high probability. In comparison, for linear classes, leveraging the $O(1/\sqrt{n})$-rate pointwise convergence we prove in \cref{sec:fqi}, \cref{thm: fast rate} with $\alpha=1$ yields a faster $O(1/n)$-regret on average over the data $\mathcal{D}$. \cref{thm: fast rate Lp} also accommodates more complex and nonparametric function classes; we refer the reader to \citet{wainwright2019high} to bounds on critical radii for various function classes.

\subsection{Guarantees Under Single-Policy Concentratability and Single-Policy Margin}

The fast-rate results established so far, including in this section, require that the offline data covers the state-action potentially distribution induced by \emph{any} policy (\cref{cond:conc} or \cref{ass: min eigen}, depending on the result). This can sometimes be restrictive in offline RL since exploration is limited. We now consider a relaxation where we only require coverage and margin with respect to the distribution induced by the optimal policy. We do this, however, at the cost of obtaining a slower rate than before. Nonetheless, the rate can still be arbitrarily fast as $p\infty,\alpha\to\infty$.

We now adapt \cref{ass: margin,cond:conc} to be formulated exclusively in terms of the optimal policy without considering any other policies, and we then prove an analog to \cref{thm: fast rate Lp}, translating the $L_p$-error of any $\bar f$ to a regret guarantee (deterministically, \ie, surely over the data).

\begin{assumption}[Single-Policy Margin] \label{ass: margin_single}
There exist constants $\delta_0 >0, \alpha\in[0,\infty]$ such that for all $\delta>0$,
\begin{align*}
\pr_{s\sim d^{\pi^*}}(0<\Delta(s) \le \delta) \le (\delta/\delta_0)^{\alpha},
\end{align*}
\end{assumption}

\begin{assumption}[Single-Policy Concentratability] \label{cond:conc_single}
Let $\pi^\mathrm{Uni}$ denote the uniform policy over $\Acal$. There exists a constant $C_{\mathrm{sin}}>0$ such that  $d^{\pi^*}\times \pi^\mathrm{Uni}$ and $d^{\pi^*}\times\pi^*$ are absolutely continuous with respect to $\mu_b$ with Radon-Nykodim derivatives bounded by $C_{\mathrm{sin}}$.
\end{assumption}

\begin{theorem}\label{thm: fast rate Lp2_single}
Let $\bar f$ be given. Suppose $|r_t|\leq M$ for $t=1,2,\ldots$ and that \cref{ass: margin_single,cond:conc_single} hold. Then, for $\bar\pi=\pi_{\bar f}$, we have 
$$
V^*-V^{\bar\pi}\leq  2(1-\gamma)^{-2}M|\Acal| (2/\delta_0)^{\frac{p \alpha}{\alpha + p}}   C^{\frac{\alpha}{\alpha +p}}_{\mathrm{sin}} \|Q^{\star}-\bar f \|^{\frac{p\alpha}{(\alpha +p)}}_{p,\nu}. 
$$
\end{theorem}

The next question is how to obtain guarantees for
$\|Q^{\star}-\bar f \|_{p,\nu}$ \emph{without} coverage with respect to all policies, as is needed in \cref{lem:FQI}. 
Recently, \citet{uehara2023refined} demonstrated that using a minimax-type estimator $\hat f$ we can obtain guarantees for $\|Q^{\star}-\hat f \|_{2,\nu}$ under just single-policy concentrability and the realizability of a Lagrange multiplier. Together with \cref{thm: fast rate Lp2_single} this would lead to a regret guarantee under single-policy concentrability and single-policy margin conditions. However, the rate can be much slower, as the exponent on the $L_p$-error changes from $p(1+\alpha)/(\alpha+p)$ in \cref{thm: fast rate Lp} to $(p\alpha)/(\alpha+p)$ in \cref{thm: fast rate Lp2_single}. For example, even if we can guarantee $\|Q^*-\bar f \|_{2}=O_p(1/\sqrt{n})$ as we can for linear models then the resulting rate obtained from \cref{thm: fast rate Lp2_single} would be only $O_p(1/n^{1/3})$ when $\alpha=1$.}

\section{Related Literature}

\paragraph{Tabular offline RL} \citet{pmlr-v125-agarwal20b,edssjs.BD18D58A20130101} analyze the tabular case where we have a generative model \citep{pmlr-v125-agarwal20b,edssjs.BD18D58A20130101}, that is an oracle for drawing from the MDP's reward and transition distributions, but assuming a generative model is much stronger than our offline setting, where we just see data passively. In the tabular offline setting, the minimax optimal regret rate is obtained by \citet{yin2021near} and is $O(1/\sqrt{n})$. Our result in the tabular case is $\exp(-\Omega(n))$, but it depends on instance parameters such as $\delta_0$, \ie, it is \emph{not} minimax if $\delta_0$ is allowed to vary, and in particular approach $0$. 

\paragraph{Value-based offline RL}  Value-based offline RL is an approach to offline RL where we estimate $Q^{*}$ and then use the corresponding greedy (\ie, argmax) policy (some also consider a smoothed softmax version). This is the approach we studied here. The most common way to estimate $Q^{*}$ is FQI \citep{ernst2005tree}, which we analyze in \cref{sec:fqi}. \citet{munos2008finite,FanJianqing2019ATAo,ChenJinglin2019ICiB} have analyzed finite-sample error bounds for FQI. Since they obtain bounds on the \emph{average} error, their analysis is not directly applicable to our setting. We therefore established uniform convergence in order to derive fast regret rates for the resulting greedy policy.  

Another common method to estimate $Q^*$ is modified BRM and its variants, which we analyze in \cref{sec:sbeed}. The finite-sample error bound of the $Q^{*}$-function has been analyzed in \citet{ChenJinglin2019ICiB,XieTengyang2020QASf} when the hypothesis class is finite. In a general function approximation setting (such as linear), a slow rate of $O({n}^{-1/4})$ is obtained by \citet{antos2008learning} for policy evaluation (estimate $Q^\pi$ for a given $\pi$). Since existing analyses are for average errors and/or not tight, it is not directly applicable. We therefore established uniform convergence. 

Beyond FQI and BRM/MSBO, there are many offline estimators for $Q^{*}$ such as SBEED \citep{pmlr-v80-dai18c}, which uses a smoothed version of the max operator in modified RBM, and MABO \citep{XieTengyang2020QASf}, which is derived by a conditional moment equation formulation of $Q^{*}$. In all of the aforementioned value-based offline RL work including these two, the final regret is $O(1/\sqrt{n})$. Our analysis shows that we can obtain the faster rate depending on the margin condition. 

\paragraph{Policy-based offline RL} Policy-based offline RL is an approach where we directly optimize a policy among some restricted policy class. One common approach is fitted policy iteration \citep{LagoudakisMichail2004LPI,LazaricAlessandro2010FAoL}. Finite-sample regret bound have been analyzed by \citet{antos2008learning,liu2020provably}, which show that the final regret is $O(1/\sqrt{n})$. 
Another common way is offline policy gradient \citep{NachumOfir2019APGf}. \citet{EEOPG2020} showed the asymptotic regret of a offline policy gradient method based on efficient estimation is $O(1/\sqrt{n})$. Note our analysis here does not apply to the policy-based approach. We leave it as future work. 

\paragraph{Offline policy evaluation} Offline policy evaluation (OPE) is the task evaluating the policy value of a single policy \citep{Precup2000}, \ie, estimating $V^\pi$ for a given $\pi$. The typical error rate is $O(1/\sqrt{n})$ \citep{DuanYaqi2020MOEw,thomas2016,kallus2022efficiently,liu2018breaking}. \citet{kallus2022efficiently,kallus2020double} focuses on how to reduce the constant in the leading $1/\sqrt{n}$ term. Note that our work does \emph{not} imply a fast rate for OPE since regret is different from estimation error. In particular, our fast rate leverages the impact of the downstream decision-making problem, after estimation.

\paragraph{Margin Condition and Fast Rate}

In classification, \cite{tsybakov2004optimal,audibert2007fast} showed that both empirical risk minimization and plug-in methods can achieve $o(1/\sqrt{n})$ fast rates under a margin condition that quantifies the concentration of mass of $P(Y=1\mid X)$ near $1/2$.
An analogous condition has been used in contextual
bandits to quantify the separation between arms and get low regret \citep{goldenshluger2013linear,perchet2013multi,hu2022smooth,bastani2020online}.
In particular, such margin conditions for $\alpha<\infty$ can be much weaker than assuming strict separation of arms ($\alpha=\infty$), which is often unrealistic.
\citet{luedtke2020performance} consider fast rates for offline contextual bandits, showing sub-square-root rates for policy-based methods and leveraging a bandit-type margin condition for analyzing value-based methods.
\citet{hu2022fast} use a margin condition that characterizes the  distribution of near-degeneracy in contextual linear optimization problems and obtained fast regret rates in that problem for both plug-in policies and empirical risk minimization.
\edit{The key difference to \citet{luedtke2020performance,audibert2007fast,hu2022fast} is they focus on one-shot decisions with no horizon -- classification of a binary label, classification of which of two actions is optimal, or classification of which vertex of a polytope is optimal -- whereas we consider offline reinforcement learning where handling state-transition dynamics is key both in obtaining regret rates or pointwise learning rates on $Q$-functions.}

\paragraph{Gap Assumption in Online RL} A variety of work on \emph{online} RL makes use of a strict gap assumption \citep{du2020good,du2019provably,simchowitz2019non,yang2021q}, wherein there is a strictly positive lower bound on the reward gap between any two actions, akin to $\alpha=\infty$ in our \cref{ass: margin}. Under such an assumption, instance-dependent logarithmic regret bounds are obtained for the online problem, which is in agreement with exponential decay of regret for the offline problem. Nonetheless, a strict gap may not exist when $\S$ is continuous or it can lead to misleading asymptotic predictions even when $\S$ is finite but very large. Our results in the offline problem highlight a finer spectrum of margin behavior, which possibly suggest an avenue of future work on the online problem to obtain sub-square-root, super-polylogarithmic instance-dependent regret bounds as seen in contextual bandits \citep{hu2022smooth}.

\section{Proofs}

\subsection{Proofs for \cref{sec:q greedy}}

\begin{myproof}[Proof of \cref{thm: fast rate}]
For any policy $\pi$, define $A^{\pi}(s,a) = Q^{\pi} (s,a) - V^{\pi}(s)$, and $A^*(s,a) = A^{\pi^*}(s,a) \le 0$.
By the performance difference lemma (\citet[Lemma 1.16]{agarwal2019reinforcement}),
\begin{align}
    (1-\gamma) (V^* - V^{\hat{\pi}}) = & \expect_{s\sim d^{\hat{\pi}}} [-A^*(s, \hat{\pi}(s)] \notag \\
    = & \expect_{s\sim d^{\hat{\pi}}} [Q^*(s, \pi^*(s))-Q^*(s, \hat{\pi}(s))]. \label{eq: performance difference}
\end{align}

Define the events
\begin{align*}
    & B_0 =\{0 < Q^*(s, \pi^*(s)) - Q^*(s,\hat{\pi}(s)) \le 2a_n\}, \\
    & B_i =\{ 2^ia_n < Q^*(s, \pi^*(s)) - Q^*(s,\hat{\pi}(s)) \le 2^{i+1}a_n\} \quad i\ge 1,\\
    & B' =\{ 2^{i_{\max}+1}a_n < Q^*(s, \pi^*(s)) - Q^*(s,\hat{\pi}(s))\}.
\end{align*}
Peeling on $Q^*(s, \pi^*(s)) - Q^*(s,\hat{\pi}(s))$ we get
\begin{align}
   & \expect_{s\sim d^{\hat{\pi}}} [Q^*(s, \pi^*(s))-Q^*(s, \hat{\pi}(s))]\notag\\
    &=  \expect_{s\sim d^{\hat{\pi}}} [\sum_{i=0}^{i_{\max}}( Q^*(s, \pi^*(s)) - Q^*(s,\hat{\pi}(s)) )\ind\{ B_i\}+( Q^*(s, \pi^*(s)) - Q^*(s,\hat{\pi}(s)) )\ind\{ B'\}] \notag\\
    &\le  2a_n \sum_{i=0}^{i_{\max}}2^i \pr_{s\sim d^{\hat{\pi}}}( B_i) + Q_{\max} \pr_{s\sim d^{\hat{\pi}}}( B'). \label{eq: peeling}
\end{align}
In what follows, we control $\expect_\D \expect_{s\sim d^{\hat{\pi}}}[\ind\{B_i\}]$ for $i\in [0, i_{\max}]$ and $\expect_\D \expect_{s\sim d^{\hat{\pi}}}[\ind\{B'\}]$, which, combined with \cref{eq: performance difference,eq: peeling}, would give an upper bound on $\expect_\D[V^* - V^{\hat{\pi}}]$.

First of all, by \cref{ass: margin},
\begin{align*}
    \expect_\D \expect_{s\sim d^{\hat{\pi}}}[\ind\{B_0\}] \le & \sup_{\pi}\pr_{s\sim d^{\pi}}(0<\Delta(s) \le 2a_n)\\
    \le & 2^{\alpha} a_n^{\alpha}/\delta_0^{\alpha}.
\end{align*}
Second, for $i\ge 1$,
\begin{align*}
     \ind\{ B_i\}
     = & \ind\{\hat{f}(s,\hat{\pi}(s)) \ge \hat{f}(s,\pi^*(s)), 2^ia_n < Q^*(s, \pi^*(s)) - Q^*(s,\hat{\pi}(s)) \le 2^{i+1}a_n  \}\\
     \le & \ind\{ Q^*(s, \pi^*(s))-\hat{f}(s,\pi^*(s))  + \hat{f}(s,\hat{\pi}(s))- Q^*(s,\hat{\pi}(s)) - 2^i a_n >0, \\
     & 0< Q^*(s, \pi^*(s)) - Q^*(s,\hat{\pi}(s)) \le 2^{i+1}a_n \}\\
     \le & \ind\{ Q^*(s, \pi^*(s))-\hat{f}(s,\pi^*(s))> 2^{i-1}a_n, 0<Q^*(s, \pi^*(s)) - Q^*(s,\hat{\pi}(s)) \le 2^{i+1}a_n \} \\
      & + \ind\{\hat{f}(s,\hat{\pi}(s))- Q^*(s,\hat{\pi}(s)) > 2^{i-1}a_n, 0<Q^*(s, \pi^*(s)) - Q^*(s,\hat{\pi}(s)) \le 2^{i+1}a_n  \}\\
      \le & \ind\{ Q^*(s, \pi^*(s))-\hat{f}(s,\pi^*(s))> 2^{i-1}a_n, 0<\Delta(s) \le 2^{i+1}a_n \} \\
      & + \ind\{\hat{f}(s,\hat{\pi}(s))- Q^*(s,\hat{\pi}(s)) > 2^{i-1}a_n, 0<\Delta(s) \le 2^{i+1}a_n  \}.
\end{align*}
where the second inequality comes from a union bound, and the last one comes from the definition of $\Delta$.
Therefore, we have for $i\ge 1$,
\begin{align*}
     &\expect_\D \expect_{s\sim d^{\hat{\pi}}}[\ind\{ B_i\}] \\
     \le & \expect_\D \expect_{s\sim d^{\hat{\pi}}}[\ind\{Q^*(s, \pi^*(s))-\hat{f}(s,\pi^*(s))> 2^{i-1}a_n, 0<\Delta(s)\le 2^{i+1}a_n\}] \\
    & + \expect_\D \expect_{s\sim d^{\hat{\pi}}}[\ind\{\hat{f}(s,\hat{\pi}(s))- Q^*(s,\hat{\pi}(s)) > 2^{i-1}a_n, 0<\Delta(s) \le 2^{i+1}a_n\}]\\
     \le & \sup_{\pi}\expect_\D \expect_{s\sim d^{\pi}}[\ind\{Q^*(s, \pi^*(s))-\hat{f}(s,\pi^*(s))> 2^{i-1}a_n, 0<\Delta(s) \le 2^{i+1}a_n\}] \\
    & + \sup_{\pi} \expect_\D \expect_{s\sim d^{\pi}}[\ind\{\hat{f}(s,\pi(s))- Q^*(s,\pi(s)) > 2^{i-1}a_n, 0<\Delta(s) \le 2^{i+1}a_n\}]\\
      \le & \sup_{\pi}\expect_{s\sim d^{\pi}}[\ind\{ 0<\Delta(s) \le 2^{i+1}a_n\} \pr_{\mathcal{D}}(Q^*(s, \pi^*(s))-\hat{f}(s,\pi^*(s))> 2^{i-1}a_n)] \\
    & + \sup_{\pi} \expect_{s\sim d^{\pi}}[\ind\{ 0<\Delta(s) \le 2^{i+1}a_n\} \pr_{\mathcal{D}}(\hat{f}(s,\pi(s))- Q^*(s,\pi(s)) > 2^{i-1}a_n)].
\end{align*}
Then from \cref{eq: infty convergence,ass: margin} we have for $i\in [1,i_{\max}]$,
\begin{align*}
     \expect_\D \expect_{s\sim d^{\hat{\pi}}}[\ind\{B_i\}] 
    \le & 2 C \exp(-  2^{2i-2}) \sup_{\pi\in\Pi}\pr_{s\sim d^{\pi}}(0<\Delta(s) \le 2^{i+1}a_n) \\
    \le & 2 C \exp(- 2^{2i-2}) (2^{i+1}a_n/\delta_0)^{\alpha}.
\end{align*}
Finally, since
\begin{align*}
    \ind\{ B'\}
     = & \ind\{\hat{f}(s,\hat{\pi}(s)) \ge \hat{f}(s,\pi^*(s)), 2^{i_{\max}+1}a_n < Q^*(s, \pi^*(s)) - Q^*(s,\hat{\pi}(s))  \}\\
     \le & \ind\{ Q^*(s, \pi^*(s))-\hat{f}(s,\pi^*(s))  + \hat{f}(s,\hat{\pi}(s))- Q^*(s,\hat{\pi}(s)) - 2^{i_{\max}+1}a_n >0 \}\\
     \le & \ind\{ Q^*(s, \pi^*(s))-\hat{f}(s,\pi^*(s))> 2^{i_{\max}}a_n\}  + \ind\{\hat{f}(s,\hat{\pi}(s))- Q^*(s,\hat{\pi}(s)) > 2^{i_{\max}}a_n  \},
\end{align*}
by \cref{eq: infty convergence} we have
\begin{align*}
    \expect_\D \expect_{s\sim d^{\hat{\pi}}}[\ind\{B'\}] \le & \sup_{\pi\in\Pi}\expect_{s\sim d^{\pi}}[\pr_{\mathcal{D}}(Q^*(s, \pi^*(s))-\hat{f}(s,\pi^*(s))> 2^{i_{\max}}a_n)] \\ & + \sup_{\pi\in\Pi} \expect_{s\sim d^{\pi}}[\pr_{\mathcal{D}}(\hat{f}(s,\pi(s))- Q^*(s,\pi(s)) > 2^{i_{\max}}a_n)]\\
    \le & 2 C \exp\prns{-  2^{2i_{\max}}}\\
    \le & 2 C \exp(-  \delta_1^2/(4a_n)^2),
\end{align*}
where the last inequality comes from the definition of $i_{\max}$.

Putting all pieces together we get
\begin{align*}
    \expect_\D[V^* - V^{\hat{\pi}}] \le & \frac{2^{\alpha+1} }{(1-\gamma) \delta_0^{\alpha}} \prns{1  + \sum_{i=1}^{i_{\max}}  C \exp\prns{- 2^{2i-2}}2^{(\alpha+1)i+1}} a_n^{\alpha+1} \\
    & + \frac{2Q_{\max}C}{1-\gamma} \exp\prns{-   \delta_1^2/(4a_n)^2}.
\end{align*}
\end{myproof}

\begin{myproof}[Proof of \cref{cor:finitemargin}]
The statement comes from setting $\delta_1 = \infty$ in \cref{thm: fast rate}. We now provide an upper bound on $c(\alpha) = \sum_{i=1}^{\infty}  \exp\prns{- 2^{2i-2}}2^{(\alpha+1)i+1}$.

Define $f(x) = \exp(-2^{2x-2}) 2^{(\alpha+1)x}$. The maximizer of $f(x)$ is $x_0 = \frac{\log(\alpha+1)+\log 2}{2\log 2}$, and $f(x_0) = \prns{\frac{2(\alpha+1)}{e}}^{(\alpha+1)/2}$. Therefore,
\begin{align*}
     \sum_{i=1}^{\infty}  \exp\prns{- 2^{2i-2}}2^{(\alpha+1)i+1} 
    = & 2\sum_{i=1}^{\lfloor x_0\rfloor -1}  f(i) + 2f(\lfloor x_0\rfloor) + 2\sum_{i=\lfloor x_0\rfloor +1}^{\infty}  f(i) \\
    \le & 2 \int_{1}^{\lfloor x_0\rfloor }  f(x) dx+ 2 f(x_0) + 2\int_{\lfloor x_0\rfloor }^{\infty}  f(x) dx \\
    = & 2 \int_{1}^{\infty }  f(x) dx+ 2 f(x_0) \\
    = & \frac{2^{\alpha+1}\Gamma\prns{\frac{\alpha+1}{2},1}}{\log 2} + 2\prns{\frac{2(\alpha+1)}{e}}^{(\alpha+1)/2}.
\end{align*}
\end{myproof}

\begin{myproof}[Proof of \cref{lemma:linearmargin}]
First, for clarity, we provide a very short proof of a weaker result where $\delta_0=\fprns{6\mu_{\max}\sum_{a\in\A}\sum_{a'\in\A:\beta_a\neq\beta_{a'}}\magd{\beta_a-\beta_{a'}}^{-1}}^{-1}$.
For simplicity suppose all $\{\beta_a:a\in\A\}$ are distinct; otherwise we can simply eliminate duplicates.
Letting $V_d(R)$ be the volume of the $R$-radius $d$-ball, we have for any $\pi\in\Pi$,
\begin{align*}
\pr_{s\sim d^\pi}\prns{0<\Delta(s)\leq\delta}&=\pr_{s\sim d^\pi}\prns{\exists a\neq a':0<Q(s,a)-Q(s,a')\leq\delta}\\
&\leq\sum_{a\neq a'}\pr_{s\sim d^\pi}\prns{0<(\beta_a-\beta_{a'})\tr\psi(s)\leq\delta}\\
&\leq \mu_{\max}\sum_{a\neq a'}\int_0^{\delta/\magd{\beta_a-\beta_{a'}}}V_{d-1}((1-u^2)_+^{1/2})du\\
&\leq \mu_{\max}\sum_{a\neq a'}6\delta/\magd{\beta_a-\beta_{a'}},
\end{align*}
since the volume of a unit ball in any dimension is always less than $\frac{8\pi^2}{15}\leq6$.

Now we present an argument to tighten the above so that the inner sum in $\delta_0$ becomes a max. Again suppose all $\{\beta_a:a\in\A\}$ are distinct; else eliminate duplicates.
Letting $\op{Vol}$ denote the Lebesgue measure and $\mathcal B_d=\braces{\magd v\leq1}$ the unit ball, we have
\begin{align*}
&\pr_{s\sim d^\pi}\prns{0<\Delta(s)\leq\delta}\leq\sum_{a'\in\A}\pr_{s\sim d^\pi}\prns{0<\Delta(s)\leq\delta,\,a'\in\argmax_a Q^*(s,a)}\\
&\leq\sum_{a'\in\A}\pr_{s\sim d^\pi}\prns{\forall a\neq a':(\beta_{a'}-\beta_{a})\tr\psi(s)\geq 0,~\exists a\neq a':(\beta_{a'}-\beta_{a})\tr\psi(s)\leq \delta}\\
&\leq \mu_{\max}\sum_{a'\in\A}
\op{Vol}\prns{\bigcup_{a\neq a'}\prns{\mathcal B_d\cap\bigcap_{a''\in\A}\braces{(\beta_{a'}-\beta_{a''})\tr v\geq0}\cap\braces{(\beta_{a'}-\beta_{a})\tr v\leq \delta}}}\\
&\leq \mu_{\max}\sum_{a'\in\A}
\op{Vol}\prns{\bigcup_{a\neq a'}\prns{\mathcal B_d\cap\bigcap_{a''\neq a}\braces{\bar\beta_{a',a''}\tr v\geq0}\cap\braces{\bar\beta_{a',a}\tr v\leq \delta/\min_{a\neq a'}\magd{\beta_{a'}-\beta_a}}}},
\end{align*}
where $\bar\beta_{a',a}=\frac{\beta_{a'}-\beta_a}{\magd{\beta_{a'}-\beta_a}}$. The first inequality is by union bound, the second by definition of $\Delta$ and including $0$ in the event, the third by the uniform upper bound on $d^\pi$, and the fourth by inclusion as we are only increasing the half space in the last term for each $a',a$.
We will next show that the inner volume term is upper bounded by $6\delta/\min_{a\neq a'}\magd{\beta_{a'}-\beta_a}$, yielding the result.

We now study the inner volume term. To abstract things, consider $\beta_1,\dots,\beta_m$ with $\magd{\beta_i}=1$ and the polyhedral cones $K^{(k)}=\{v:\beta_i\tr v\geq0\,\forall i=1,\dots,k\}$ for every $k=1,\dots,m$.
We are then concerned with
$$V=\op{Vol}\prns{\bigcup_{i=1}^m\prns{\mathcal B_d\cap K^{(m)}\cap\braces{\beta_i\tr v\leq \delta'}}}.$$
Placing a prism of height $\delta'$ (in the direction of $\beta_i$) on top of $\mathcal B_d\cap K^{(m)}\cap\braces{\beta_i\tr v= 0}$ for each $i$, we see that the sum of the prims' volumes upper bounds $V$: we are only overcounting volume outside the sphere and any overlaps between the prisms placed at different faces. Let $\partial \mathcal B_d=\{\magd v=1\}$ be the unit sphere shell
and let $\rho=d-1$ be the proportionality between the volume inside the $(d-1)$-dimensional unit sphere and its area.
Notice that the sum of the prisms' volume is equal to $\delta'$ times $\rho^{-1}$ times the perimeter of the spherical polyhedron that $K^{(m)}$ defines on the unit sphere, that is, $\abs{\partial K^{(m)}\cap \partial \mathcal B_d}$.
We claim that $\abs{\partial K^{(m)}\cap \partial \mathcal B_d}\leq \abs{\partial K^{(m-1)}\cap \partial \mathcal B_d}$.
If $\beta_m$ does not intersect the interior of $K^{(m)}$ then this is trivial. Suppose it does intersect. Let $H=\{\beta_m\tr v\geq 0\}$ and $H'=\{\beta_m\tr v\leq 0\}$. Then $\abs{K^{(m-1)}\cap \partial H\cap \partial \mathcal B_d}\leq \abs{\partial K^{(m-1)}\cap H'\cap \partial \mathcal B_d}$ because if we project $\partial K^{(m-1)}\cap H'\cap \partial \mathcal B_d$ onto $\partial H\cap \partial \mathcal B_d$ then we obtain $K^{(m-1)}\cap \partial H\cap \partial \mathcal B_d$ (that is, one side of a spherical polyhedron cannot be larger than the sum of the other sides). Therefore, by adding $\beta_m$ to $K^{(m-1)}$ to obtain $K^{(m)}$, we have lost more perimeter than we have gained, as was claimed. By repeating this, we obtain $\abs{\partial K^{(m)}\cap \partial \mathcal B_d}\leq \abs{\partial K^{(1)}\cap \partial \mathcal B_d}$. Next, notice that $\abs{\partial K^{(1)}\cap \partial \mathcal B_d}/\rho$ is just the volume of the $(d-1)$-dimensional unit ball, which is $\frac{\pi^{(d-1)/2}}{\Gamma((d+1)/2)}\leq 6$. Hence, $V\leq 6\delta'$.
\end{myproof}

{\blockedit

\begin{myproof}[Proof of \cref{lemma: margin for continuous functions}]
    For any $\pi\in \Pi$, we have
    \begin{align*}
    \pr_{s\sim d^\pi}\prns{0<\Delta(s)\leq\delta}&=\pr_{s\sim d^\pi}\prns{\exists a\neq a':0<Q^*(s,a)-Q^*(s,a')\leq\delta}\\
    &\leq\sum_{a\neq a'}\pr_{s\sim d^\pi}\prns{0<Q^*(s,a)-Q^*(s,a')\leq\delta}\\
    & \le \mu_{\max}\sum_{a\neq a'}\op{Vol}\prns{\braces{s: 0<Q^*(s,a)-Q^*(s,a')\leq\delta}}.
    \end{align*}

We now bound the inner volume term $\op{Vol}\prns{\braces{s: 0<Q^*(s,a)-Q^*(s,a')\leq\delta}}$.
Without loss of generality, assume for any $s = (s_1, \dots, s_d)$, $\abs{\partial (Q^*(s,a)-Q^*(s,a'))/\partial s_1} \ge \tau_0$.
Fix any $s_2, \dots, s_d$, and view $g(s_1) = Q^*((s_1,s_2, \dots, s_d),a)-Q^*((s_1,s_2, \dots, s_d),a')$ as a function of $s_1$.
Since $\abs{\partial (Q^*(s,a)-Q^*(s,a'))/\partial s_1} \ge \tau_0$, by Darboux's theorem, the partial derivative is always positive or always negative, 
so $g(s_1)$ has at most one zero point, and
\[
    \int_{-1}^1 \ind\braces{g(s_1) \le \delta} \le \delta/\tau_0.
\]
We can then compute the volume:
\begin{align*}
    & \quad \op{Vol}\prns{\braces{s: 0<Q^*(s,a)-Q^*(s,a')\leq\delta}} \\
    & \le \int_{-1}^1 d s_d\dots\int_{-\sqrt{1-\sum_{i=2}^d s_i^2}}^{\sqrt{1-\sum_{i=2}^d s_i^2}} \ind\braces{Q^*((s_1,s_2, \dots, s_d),a)-Q^*((s_1,s_2, \dots, s_d),a') \le \delta } d s_1   \\
    & \le \frac{\delta}{\tau_0} \int_{-1}^1 d s_d\dots\int_{-\sqrt{1-\sum_{i=3}^d s_i^2}}^{\sqrt{1-\sum_{i=3}^d s_i^2}}  d s_2  \\
    & =  \frac{\delta}{\tau_0} \op{Vol}\prns{\mathcal{B}_{d-1}}\\
    & \le \frac{6\delta}{\tau_0},
\end{align*}
and our conclusion follows.
\end{myproof}
}

\subsection{Proofs for \cref{sec:fqi}}

\begin{myproof}[Proof of \cref{lemma: convergence of T operator}]
Let $y_i^f = r_i + \gamma  \max_{a'\in\mathcal{A}} f(s_i',a')$. We can write
\begin{align*}
  y_i^f =  w_f\tr\phi(s_i, a_i) + \epsilon_i^f,  
\end{align*}
where $ \epsilon_i^f = r_i + \gamma  \max_{a'\in\mathcal{A}} f(s_i',a') - \mathcal{T}f (s_i, a_i) $. Note that
$\expect_{r_i, s_i'\sim P_s(\cdot\mid s_i,a_i)} [\epsilon_i^f \mid s_i, a_i]= 0$. 

When the event $\lambda_{\min}(\hat{\Sigma})> n\lambda_0/2 $ holds,
\begin{align*}
    \sup_{f\in \mathcal{F}}\norm{\hat{w}_f' - w_f} =& \sup_{f\in \mathcal{F}}\norm{\hat{\Sigma}^{-1} \prns{\sum_{i=1}^n \phi(s_i, a_i) \epsilon_i^f}} \\
    \le & \sup_{f\in \mathcal{F}}\norm{\hat{\Sigma}^{-1}}_2 \norm{\sum_{i=1}^n \phi(s_i, a_i) \epsilon_i^f}\\
    \le & \frac{2}{n\lambda_0} \sup_{f\in \mathcal{F}}\norm{\sum_{i=1}^n \phi(s_i, a_i) \epsilon_i^f}.
\end{align*}
Therefore, from a union bound we get
\begin{align*}
    \pr\prns{\sup_{f\in \mathcal{F}} \norm{\hat{w}_f' - w_f} \ge \delta } \le  & \pr\prns{\lambda_{\min}(\hat{\Sigma})\le n\lambda_0/2}+\pr\prns{ \sup_{f\in \mathcal{F}}\norm{\sum_{i=1}^n \phi(s_i, a_i) \epsilon_i^f}\ge n\lambda_0\delta/2} \\
    \le & \underbrace{\pr\prns{\lambda_{\min}(\hat{\Sigma})\le  n\lambda_0/2}}_{(\textbf{a})} + \sum_{j=1}^d \underbrace{\pr\prns{\sup_{f\in \mathcal{F}}  \abs{\sum_{i=1}^n \phi_j(s_i, a_i) \epsilon_i^f}\ge  \frac{n\lambda_0\delta}{2\sqrt{d}}}}_{(\textbf{b})}  ,
\end{align*}
where $\phi_j$ is the $j$-th component of $\phi$.
We now aim to bound the two terms on the right hand side.

\paragraph{Bounding (a).}
Note that
\begin{align*}
   &  \lambda_{\max}\prns{\phi(s_i, a_i) \phi(s_i, a_i)\tr} = \max_{\|u\| = 1} u\tr\phi(s_i, a_i) \phi(s_i, a_i)\tr u \le 1,\\
   & \lambda_{\min}\prns{\sum_{i=1}^n \expect\phi(s_i, a_i) \phi(s_i, a_i)\tr} \ge \sum_{i=1}^n\lambda_{\min}( \expect\phi(s_i, a_i) \phi(s_i, a_i)\tr)= n\lambda_0.
\end{align*}
By matrix Chernoff inequality (\citet[Theorem 5.1.1]{tropp2015introduction}),
\begin{align*}
    \pr\prns{\lambda_{\min}(\hat{\Sigma}) \le n\lambda_0/2} \le d\exp(-n\lambda_0/8).
\end{align*}

\paragraph{Bounding (b).}
Let $h^{f}\prns{s,a,r, s'} = \phi_j(s,a)(r + \gamma  \max_{a'\in\mathcal{A}} f(s',a') - \mathcal{T}f (s, a) )$.
We have $ \expect[h^{f}(s,a,r, s')]= 0.$
Define $h_i^{f} = h(s_i,a_i,r_i, s_i')$, $\textbf{h}^{f} = \prns{h_1^{f}, \dots, h_n^{f}}$, and $\textbf{H} = \{\textbf{h}^{f}: f\in \mathcal{F}\}$.
Note that $\sup_{f\in \mathcal{F}} \abs{h^{f}_i} \le M+ B$ for each $i$.

By \citet[Theorem 2.2]{pollard1990empirical}, for any convex increasing $\Phi$,
\begin{align*}
    \expect \Phi\prns{\sup_{f\in \mathcal{F}}  \abs{\sum_{i=1}^n \phi_j(s_i, a_i) \epsilon_i^f}} = & \expect \Phi\prns{\sup_{f\in \mathcal{F}}  \abs{\sum_{i=1}^n h_i^{f}}}\\
    \le & \expect \expect_{\sigma} \Phi\prns{2 \sup_{\textbf{h} \in \textbf{H}}\abs{\langle\sigma, \textbf{h}\rangle}}.
\end{align*}

Let $\Psi(x) = \frac{1}{5} \exp(x^2)$. 
By \citet[Theorem 3.5]{pollard1990empirical},
\begin{align*}
    \expect \expect_{\sigma} \Psi\prns{\frac{1}{J} \sup_{\textbf{h} \in \textbf{H}}|\langle\sigma, \textbf{h}\rangle|} \le 1, \quad \text{where } J = 9\int_0^{\sqrt{n}(M+B)} \sqrt{\log D(\delta, \textbf{H})} d \delta.
\end{align*}
Since
\begin{align*}
    \norm{\textbf{h}^{f} - \textbf{h}^{g}} \le 2\gamma\sqrt{n}\norm{f-g}_{\infty},
\end{align*}
we have
\begin{align*}
    \log D(\delta, \textbf{H}) \le& \log D\prns{\frac{\delta}{2\gamma\sqrt{n}}, \mathcal{F}, \norm{\cdot}_{\infty}}\\
    \le & d\log\prns{1+8\gamma B\sqrt{n}/\delta}.
\end{align*}
where the last inequality comes from \citet[Lemma 5.5 and Example 5.8]{wainwright2019high}.
This implies
\begin{align*}
    J \le & 9\sqrt{nd}(M+B)\int_0^1 \sqrt{\log\prns{1+8/\delta'}} d \delta'\\
    \le &18 \sqrt{nd}(M+B).
\end{align*}
Combining all pieces we get for all $\delta>0$,
\begin{align*}
    \pr\prns{\sup_{f\in \mathcal{F}}  \abs{\sum_{i=1}^n \phi_j(s_i, a_i) \epsilon_i^f} > \delta}\le 5\exp\prns{-\frac{\delta^2}{1296 nd (M+B)^2 }}.
\end{align*}

\paragraph{Bounding the Error $\sup_{f\in\mathcal{F}}\norm{\hat{w}_f\tr\phi - \mathcal{T}f}_{\infty} $.}
Recall that $\hat{w}_f$ is the projection of $\hat{w}_f'$ onto $\mathcal{B}(0,B)$, so we naturally have $\pr\prns{\sup_{f\in \mathcal{F}} \norm{\hat{w}_f - w_f} \ge \delta } = 0$ for $\delta> 2B$. On the other hand, when $\delta\le 2B$,
\begin{align*}
  \pr\prns{\sup_{f\in\mathcal{F}}\norm{\hat{w}_f\tr\phi - \mathcal{T}f}_{\infty} \ge \delta}
  \le & \pr\prns{\sup_{f\in \mathcal{F}} \norm{\hat{w}_f - w_f} \ge \delta } \\
    \le &\pr\prns{\sup_{f\in \mathcal{F}} \norm{\hat{w}_f' - w_f} \ge \delta }\\ 
    \le &  5d\exp\prns{-\frac{n\lambda_0^2\delta^2}{5184 d^2 (M+B)^2 }} +  d\exp\prns{-n\lambda_0/8}\\
    \le  &6d\exp\prns{- \frac{\lambda_0^2}{5184 d^2 (M+B)^2 } n\delta^2}.
\end{align*}
where we used the fact that $\lambda_0 \le 1$.
Our conclusion then follows.
\end{myproof}

\begin{myproof}[Proof of \cref{lemma: l infty fqi}]
Note that for any $k\in[K]$,
\begin{align*}
    & Q^*(s,a) = \mathcal{T} Q^*(s,a) = r(s,a) + \gamma \expect_{s'\sim P(\cdot\mid s,a)} \max_{a'\in\mathcal{A}} Q^*(s',a'),\\
    & \mathcal{T} \hat f_{k-1}(s,a) = r(s,a) + \gamma \expect_{s'\sim P(\cdot\mid s,a)} \max_{a'\in\mathcal{A}} \hat f_{k-1}(s',a'),
\end{align*}
so we have
\begin{align*}
    \|Q^* - \mathcal{T} \hat f_{k-1}\|_{\infty} \le &\sup_{s,a} \gamma\expect_{s'\sim P(\cdot\mid s,a)}\mid \max_{a'\in\mathcal{A}} Q^*(s',a') - \max_{a'\in\mathcal{A}} \hat f_{k-1}(s',a')|\\
    \le & \sup_{s,a} \gamma\expect_{s'\sim P(\cdot\mid s,a)} \max_{a'\in\mathcal{A}}| Q^*(s',a') -  \hat f_{k-1}(s',a')|\\
    \le & \gamma \|Q^*-  \hat f_{k-1}\|_{\infty}.
\end{align*}
Therefore,
\begin{align*}
    \|Q^*- \hat f_k\|_{\infty} \le & \|Q^* - \mathcal{T} \hat f_{k-1}\|_{\infty} + \|\hat f_k - \mathcal{T} \hat f_{k-1}\|_{\infty}\\
    \le & \gamma \|Q^*-  \hat f_{k-1}\|_{\infty} + \|\hat f_k - \mathcal{T} \hat f_{k-1}\|_{\infty}.
\end{align*}
We can recursively repeat the same process for $\|Q^*-  \hat f_{k-1}\|_{\infty}$ till $k=0$, and get 
\begin{align*}
    \|Q^*- \hat f_K\|_{\infty} \le & \sum_{t=0}^{K-1} \gamma^t\|\hat f_{K-t} - \mathcal{T} \hat f_{K-t-1}\|_{\infty} + \gamma^K \|Q^*-  \hat f_{K-1}\|_{\infty}\\
    \le & \sum_{t=0}^{K-1} \gamma^t\|\hat f_{K-t} - \mathcal{T} \hat f_{K-t-1}\|_{\infty} +  \frac{\gamma^K M}{1-\gamma}.
\end{align*}

Note that $K\ge \frac{\log(\lambda_0^2 n/(5184 d^2))}{2\log(1/\gamma)}$ implies that $ \frac{\gamma^K M}{(1-\gamma)}\le\frac{72 d (M+B)}{(1-\gamma)\lambda_0 \sqrt{ n}} $.
Moreover, by \cref{lemma: convergence of T operator}, we know that for any $\delta>0$,
\begin{align*}
    \pr \prns{\sum_{t=0}^{k-1} \gamma^t\norm{f_{k-t} - \mathcal{T} f_{k-t-1}}_{\infty} \ge \frac{\delta}{2}}\le & \pr\prns{\sup_{f\in\mathcal{F}}\norm{\hat{w}_f\tr\phi - \mathcal{T}f}_{\infty} \ge \frac{\delta(1-\gamma)}{2} }\\
    \le & %
    6d\exp(-  \frac{(1-\gamma)^2\lambda_0^2}{20736 d^2 (M+B)^2 } n\delta^2 ).
\end{align*}
Therefore, for any 
$\delta \ge \frac{144 d (M+B)}{(1-\gamma)\lambda_0 \sqrt{ n}}$,
\begin{align*}
    \pr(\|Q^*- f_k\|_{\infty}\ge \delta) \le 6d\exp(-  \frac{(1-\gamma)^2\lambda_0^2}{20736 d^2 (M+B)^2 } n\delta^2 ).
\end{align*}
\end{myproof}

\subsection{Proof for \cref{sec:sbeed}}

\begin{myproof}[Proof of \cref{thm:msbo}]
We begin by introducing notation and the notion of the critical radius.
We introduce $\E_n[f(\cdot)]=1/n\sum_{i}f(s_i,a_i,r_i,s'_i)$. In addition, we use $\E[\cdot]$ to present $\E_{\mu_b}[\cdot]$. We define 
\begin{align*}
    \Phi(q;w) &=\E[\{r-q(s,a)+\gamma \max_{a'}q(s',a')\}w(s,a)],\\
  \Phi_n(q;w) &=\E_n[\{r-q(s,a)+\gamma \max_{a'}q(s',a')\}w(s,a)],\\
     \Phi^{\zeta}_n(q;w) &= \Phi_n(q;w)-\zeta\E_n[w^2],\\
         \Phi^{\zeta}(q;w) &=\Phi(q;w)-\zeta\E[w^2]. 
\end{align*}
Let $\eta_n$ be the upper bound of the critical radius of $\Fcal_{w}$ and 
\begin{align*}
    \Gcal_{q}=\{(s,a)\mapsto w(s,a)\{-q(s,a)+\gamma \max_{a'}q(s',a')+Q^{*}(s,a)+\gamma \max_{a'}Q^{*}(s',a')\}  : w\in \Fcal_w,q\in \Fcal\}. 
\end{align*}
From a standard results on linear models, $\eta_n=c\sqrt{d\log n/n}$. Then, from \citet[Theorem 14.1]{wainwright2019high}, with $1-c_0\exp(-c_1n\eta^2/M'^2)$, for any $\eta \geq \eta_n$, we have 
\begin{align}\label{eq:martin_q}
    \forall w(s,a)\in \Fcal_{w},\,|\E_n[w^2]-\E[w^2]|\leq 0.5\E[w^2]+\eta^2
\end{align}
noting $\eta_n$ upper bounds the critical radius of $\Fcal_{w}$. 

\paragraph{Calculation of the upper bound of $    \sup_{w\in \Fcal_{w}}\{ \Phi_n(\hat f;w)- \Phi_n(Q^{*};w)-2\zeta \E_n[w^2] \} $. }

By definition of $\hat f$ and $Q^{*}\in \Fcal_{q}$, we have 
\begin{align}\label{eq:obvious_def}
    \sup_{w\in \Fcal_{w}}   \Phi^{\zeta}_n(\hat f;w) \leq  \sup_{w\in \Fcal_{w}}   \Phi^{\zeta}_n(Q^{*};w) 
\end{align}
From  \citet[Theorem 14.20]{wainwright2019high}, with probability $1-c_0\exp(-c_1n\eta^2/M'^2)$, for any $\eta \geq \eta_n$,  we have 
\begin{align}\label{eq:martin2_q}
    \forall w\in \Fcal_{w}: |  \Phi_n(Q^{*};w) -\Phi(Q^{*};w)   |\leq cC_1\{\eta \E[w^2]^{1/2}+\eta^2\}. 
\end{align}
Here, we use $l(a_1,a_2):=a_1a_2,a_1=w(s,a),a_2=r-q(s,a)+\gamma \max_{a'} q(s',a') $ is $2(1+\gamma)B$-Lipschitz with respect to $a_1$ by defining $C_1=2(1+\gamma)B$, that is, 
\begin{align*}
    |l(a_1,a_2)-l(a'_1,a_2)|\leq C_1|a_1-a'_1|.  
\end{align*}

Thus, 
\begin{align} 
    \sup_{w \in \Fcal_{w} } \Phi^{\zeta}_n(Q^{*};w) &=    \sup_{w \in \Fcal_{w} } \braces{ \Phi_n(Q^{*};w) -\zeta \E_n[w^2]} && \text{Definition}\nonumber \\
&\leq   \sup_{w \in \Fcal_{w} } \braces{ \Phi(Q^{*};w)+cC_1\eta \E[w^2]^{1/2}+ cC_1\eta^2 -\zeta\E_n[w^2]} && \text{From \cref{eq:martin2_q}}\nonumber \\
&\leq    \sup_{w \in \Fcal_{w} } \braces{ \Phi(Q^{*};w)+cC_1\eta \E[w^2]^{1/2}+cC_1\eta^2 -0.5\zeta\E[w^2]+\zeta\eta^2} && \text{From \cref{eq:martin_q}} \nonumber \\
&\leq  \sup_{w \in \Fcal_{w} }  \braces{\Phi(Q^{*};w)+(4c^2C^2_1/\zeta+cC_1+\zeta) \eta^2}.  \label{eq:help3}
\end{align}
In the last line, we use a general inequality, $a,b>0$:
\begin{align*}
    \sup_{w\in \Fcal_{w} }(a\E[w^2]^{1/2}-b\E[w^2])\leq a^2/4b. 
\end{align*}
Moreover,
{\small 
\begin{align*}
    \sup_{w\in \Fcal_{w}}\{ \Phi^{\zeta}_n(\hat f;w) \}&=     \sup_{w\in \Fcal_{w}}\{ \Phi_n(\hat f;w)- \Phi_n(Q^{*};w)+ \Phi_n(Q^{*};w) -\zeta \E_n[w^2]\}\\
    &\geq     \sup_{w\in \Fcal_{w}}\{ \Phi_n(\hat f;w)- \Phi_n(Q^{*};w) - 2\zeta \E_n[w^2]\}+  \inf_{w\in \Fcal_{w}}\{ \Phi_n(Q^{*};w) +\zeta  \E_n[w^2]\} \\
     &=     \sup_{w\in \Fcal_{w}}\{ \Phi_n(\hat f;w)- \Phi_n(Q^{*};w) - 2\zeta \E_n[w^2]\}+  \inf_{-w\in \Fcal_{w}}\{ \Phi_n(Q^{*};-w) + \zeta \E_n[w^2]\} \\
         &=     \sup_{w\in \Fcal_{w}}\{ \Phi_n(\hat f;w)- \Phi_n(Q^{*};w) - 2\zeta \E_n[w^2]\}+  \inf_{-w\in \Fcal_{w}}\{-\Phi_n(Q^{*};w) + \zeta \E_n[w^2]\} \\
         &=     \sup_{w\in \Fcal_{w}}\{ \Phi_n(\hat f;w)- \Phi_n(Q^{*};w) - 2\zeta \E_n[w^2]\}-  \sup_{-w\in \Fcal_{w}}\{\Phi_n(Q^{*};w)- \zeta \E_n[w^2]\} \\
    & =     \sup_{w\in \Fcal_{w}}\{ \Phi_n(\hat f;w)- \Phi_n(Q^{*};w)-2\zeta \E_n[w^2]\}-\sup_{w\in \Fcal_{w}}\Phi^{\zeta}_n(Q^{*};w).
\end{align*}
}
Here, we use $\Fcal_{w}$ is symmetric.  Therefore, 
\begin{align*}
    \sup_{w\in \Fcal_{w}}\{ \Phi_n(\hat f;w)- \Phi_n(Q^{*};w)-2\zeta \E_n[w^2] \}&\leq \sup_{w\in \Fcal_{w}}\braces{\Phi^{\zeta}_n(Q^{*};w)}+    \sup_{w\in \Fcal_{w}}\{ \Phi^{\zeta}_n(\hat f;w) \}\\
    &\leq 2\sup_{w\in \Fcal_{w}}\Phi^{\zeta}_n(Q^{*};w)\\
    & \leq   \sup_{w \in \Fcal_{w} }  \braces{\Phi(Q^{*};w)+(c^24C^2_1/\zeta+vC_1+\zeta)\eta^2}\\
    &= (c^24C^2_1/\zeta+vC_1+\zeta)\eta^2.  
\end{align*}

\paragraph{Calculation of the lower bound of $    \sup_{w\in \Fcal_{w}}\{ \Phi_n(\hat f;w)- \Phi_n(Q^{*};w)-2\zeta \E_n[w^2] \} $.}
Define 
\begin{align*}
    w_q=(\Tcal-I)q. 
\end{align*}
Suppose $\{\E[w^2_{\hat f} ]\}^{1/2}\geq \eta$, and let $\kappa=\eta/\{2\{\E[w^2_{\hat f} ]\}^{1/2}\}\in [0,0.5]$. Then, noting $\Fcal_{w}$ is star-convex,
\begin{align*}
  \sup_{w\in \Fcal_{w}}\{ \Phi_n(\hat f;w)- \Phi_n(Q^{*};w)-2\zeta \E_n[w^2] \}\geq \kappa\{\Phi_n(\hat f,w_{\hat f})-\Phi_n(Q^{*},w_{\hat f})\}-2\kappa^2\zeta\E_n[w^2_{\hat f}]. 
\end{align*}
since  $\kappa w_{\hat f}\in \Fcal_{w}$. Then,
\begin{align*}
    \kappa^2\E_n[w^2_{\hat f}]& \leq \kappa^2\{1.5\E[w^2_{\hat f}]+0.5\eta^2\}  \tag{ \cref{eq:martin_q}} \\
    & \leq 3\eta^2.  \tag{Definition of $\kappa$}
\end{align*}
Therefore, 
\begin{align*}
     \sup_{w\in \Fcal_{w}}\{ \Phi_n(\hat f;w)- \Phi_n(Q^{*};w)-2\E_n[w^2] \}\geq \kappa\{\Phi_n(\hat f,w_{\hat f})-\Phi_n(Q^{*},w_{\hat f})\}-2\zeta \eta^2. 
\end{align*}
Using 
\begin{align*}
    \Phi_n(q,w_q)-  \Phi_n(Q^{*},w_q)&=\E_n[\{-q(s,a)+Q^{*}(s,a)+\gamma \max_{a'} q(s',a')- \gamma \max_{a'} Q^{*}(s',a')\}w_q(s,a)], 
\end{align*}
from \citet[Theorem 14.20]{wainwright2019high}, with probability $1-c_0\exp(-c_1n\eta^2/M'^2)$, for any $\eta \geq \eta_n$, for any $ q\in \Fcal_{q}$, 
\begin{align*}
    &|\Phi_n(q,w_q)-  \Phi_n(Q^{*},w_q)-\{\Phi(q,w_q)-  \Phi(Q^{*},w_q)\}|\\
    &=|\E_n[\{-q(s,a)+Q^{*}(s,a)+\gamma \max_{a'} q(s',a')- \gamma \max_{a'} Q^{*}(s',a')\}w_q(s,a)  ]\\ &-\E[\{-q(s,a)+Q^{*}(s,a)+\gamma \max_{a'} q(s',a')- \gamma \max_{a'} Q^{*}(s',a')\}w_q(s,a))  ]  |\\ 
    &\leq (\eta\E[\{-q(s,a)+Q^{*}(s,a)+\gamma \max_{a'} q(s',a')- \gamma \max_{a'} Q^{*}(s',a')\}^2w^2_q(s,a)]^{1/2}+\eta^2)\\
 &\leq (\eta M'\{\E[w_q ]\}^{1/2}+\eta^2). 
\end{align*}
Here, we invoke \citet[Theorem 14.20]{wainwright2019high} by treating $l(a_1,a_2)=a_1,\,a_1= \{q(s,a)-Q^{*}(s,a)-\gamma \max_{a'} q(s',a')+ \gamma \max_{a'} Q^{*}(s',a')\}w_q(s,a)\}$. 

Thus, 
\begin{align*}
    \kappa\{\Phi_n(\hat f,w_{\hat f})-  \Phi_n(Q^{*},w_{\hat f})\}& \geq  \kappa\{\Phi(\hat f,w_{\hat f})-  \Phi(Q^{*},w_{\hat f})\}-\kappa(M'\eta\{\E[w^2_{\hat f} ]\}^{1/2}+\eta^2). \tag{$\kappa\leq 0.5$} \\
    & \geq\kappa\{\Phi(\hat f,w_{\hat f})-  \Phi(Q^{*},w_{\hat f})\}-\kappa(M'\eta\{\E[w^2_{\hat f} ]\}^{1/2})-0.5\eta^2 \\
    &\overset{\text{(a)}}= \kappa\E[w_{\hat f}(s,a)(\Tcal-I)\hat f(s,a)]-\kappa(M'\eta\{\E[w^2_{\hat f} ]\}^{1/2})-0.5\eta^2 \\
  &= \frac{\eta}{2\E[w_{\hat f}^2]^{1/2}}\{ \E[w_{\hat f}^2]^{1/2}-M'\eta\{\E[w^2_{\hat f} ]\}^{1/2}\}-0.5\eta^2 \\
    &\geq 0.5\eta \E[\{(\Tcal-I)\hat f\}^2]^{1/2}-(0.5+M')\eta^2. 
\end{align*}
For (a), we use 
\begin{align*}
    \Phi(\hat f,w_{\hat f})-  \Phi(Q^{*},w_{\hat f})&=\E[w_{\hat f}(s,a)\{-\hat f(s,a)+Q^{*}(s,a)+\gamma  \hat\max_{a'} f(s',a')- \gamma \max_{a'} Q^{*}(s',a')\}]\\
    &= \E[w_{\hat f}(s,a)(\Tcal-I)\hat f(s,a)].
\end{align*}

\paragraph{Combining the upper and lower bound of $    \sup_{w\in \Fcal_{w}}\{ \Phi_n(\hat f;w)- \Phi_n(Q^{*};w)-2\zeta \E_n[w^2] \} $.}

Thus, for $\eta>\eta_n$ with probability $1-c_0\exp(-c_1 n\eta^2/M'^2)$, $\{\E[w^2_{\hat f} ]\}^{1/2}\leq \eta$ or 
\begin{align*}
\eta \E[\{(\Tcal-I)\hat f\}^2]^{1/2}-(M'+\zeta )\eta^2\leq c_2\times (M'^2/\zeta+M'+\zeta)\eta^2. 
\end{align*}
Therefore, for $\eta\geq \eta_n$, with $1-c_0\exp(-c_1 n\eta^2/M'^2)$, we have 
\begin{align*}
   \|\hat w-w_{0}\|\lambda'_0\leq \E[\{(\Tcal-I)\hat f\}^2]^{1/2}\leq  c_2(M'^2/\zeta+M'+\zeta+1)\eta. 
\end{align*}
This implies for any $\delta\geq \eta_n$,  we have 
\begin{align*}
    \mathbb{P}( \|\hat f-Q^{*}\|_{\infty}\geq \delta)\leq \exp\prns{-c_1 \frac{n\lambda^2_0\delta ^2}{M'^2\{(M'^2/\zeta+M'+\zeta+1) \}}}. 
\end{align*}
In the end, for $\delta\geq a_n,a_n=c\{\sqrt{d}+M'\sqrt{M'^2/\zeta+M'+\zeta+1}/\lambda_0\}\sqrt{\log n/n}$, 
\begin{align*}
     \mathbb{P}( \|\hat f-Q^{*}\|_{\infty}\geq \delta)\leq \exp\prns{-\delta^2/a_n}. 
\end{align*}

\end{myproof}

\section{Concluding Remarks}

In this paper we established the first sub-$1/\sqrt{n}$ regret guarantees for offline reinforcement learning. In particular, we showed that, given an estimate of $Q^*$, the resulting $Q$-greedy policy has a regret rate given by exponentiating the estimation rate, where the exponent depends on a margin condition. We also showed that quite strong margin conditions generally hold in linear and tabular MDPs, and argued a nontrivial margin should usually hold for a given instance in practice. Our rate-speed-up result relied on pointwise convergence guarantees for $Q^*$ estimates. Since no such exist, we derived new uniform convergence guarantees for FQI and a BRM variant called MSBO (and this implied pointwise convergence). The rates our theory predict are almost exactly what is observed in practice in a simulation example.

{\blockedit
\subsection{Proof for \cref{sec:lp}}

\begin{myproof}[Proof of \cref{thm: fast rate Lp}]
Let $A^{*}(s,a) = Q^{*} (s,a) - V^{*}(s)$, $\bar A(s,a) = \bar f(s,a)-\bar f(s,\pi^*(s))$ where we fix some choice of $\pi^*$, and $\mathcal A^*=\argmax_{a\in\A}Q^*(s,a)$. Also, for a function $g(s)$, and an $s$-distribution $\mu$, define $\|g\|_{p,\mu}^p=\expect_{s\sim\mu}\abs{g(s)}^p$.

By the performance difference lemma (\citet[Lemma 1.16]{agarwal2019reinforcement}), we have
$$
(1-\gamma) (V^*-V^{\bar\pi}) 
= \expect_{s\sim d^{\bar\pi}} \abs{A^*(s, \bar\pi(s))}
$$

Consider first $p<\infty$.
For any $t>0$, we have
\begin{align}
\expect_{s\sim d^{\bar\pi}} \abs{A^*(s, \bar\pi(s))}
=&~ \expect_{s\sim d^{\bar\pi}}[\abs{A^*(s, \bar\pi(s))}\indic{0<\abs{A^*(s, \bar\pi(s))}\leq t}]\label{eq: fast rate Lp 1}\\
&+ \expect_{s\sim d^{\bar\pi}}[\abs{A^*(s, \bar\pi(s))}\indic{\abs{A^*(s, \bar\pi(s))}> t}].\label{eq: fast rate Lp 2}
\end{align}

First, we bound \cref{eq: fast rate Lp 1} as follows:
\begin{align}
&~\expect_{s\sim d^{\bar\pi}}[\abs{A^*(s, \bar\pi(s))}\indic{0<\abs{A^*(s, \bar\pi(s))}\leq t}]\notag
\\\leq&~\expect_{s\sim d^{\bar\pi}}[\abs{A^*(s, \bar\pi(s))-\bar A(s, \bar\pi(s))}\indic{0<\abs{A^*(s, \bar\pi(s))}\leq t,\,\bar\pi(s)\notin\mathcal A^*}]\label{eq: fast rate Lp 1a}
\\\leq&~\magd{A^*(s, \bar\pi(s))-\bar A(s, \bar\pi(s))}_{p,d^{\bar\pi}}\pr_{s\sim d^{\bar\pi}}(0<\abs{A^*(s, \bar\pi(s))}\leq t,\,\bar\pi(s)\notin\mathcal A^*)^{1-1/p}\label{eq: fast rate Lp 1b}
\\\leq&~\magd{A^*(s, \bar\pi(s))-\bar A(s, \bar\pi(s))}_{p,d^{\bar\pi}}\pr_{s\sim d^{\bar\pi}}(0<\Delta(s)\leq t)^{1-1/p}\label{eq: fast rate Lp 1c}
\\\leq&~\magd{A^*(s, \bar\pi(s))-\bar A(s, \bar\pi(s))}_{p,d^{\bar\pi}}\prns{\frac t{\delta_0}}^{\alpha(p-1)/p}.\label{eq: fast rate Lp 1d}
\end{align}
In \cref{eq: fast rate Lp 1a} we use the fact that $A^*(s,\bar\pi(s))\leq0$ while $\bar A(s,\bar\pi(s))\geq0$. In \cref{eq: fast rate Lp 1b} we use H\"older's inequality. In \cref{eq: fast rate Lp 1c} we use $\Delta(s)\leq \abs{A^*(s,a)}$ provided $a\notin\mathcal A^*$. And, in \cref{eq: fast rate Lp 1d} we use \cref{ass: margin}.

Next, we bound \cref{eq: fast rate Lp 2} as follows:
\begin{align}
&~\expect_{s\sim d^{\bar\pi}}[\abs{A^*(s, \bar\pi(s))}\indic{\abs{A^*(s, \bar\pi(s))}> t}]\notag
\\\leq&~\expect_{s\sim d^{\bar\pi}}[\abs{A^*(s, \bar\pi(s))-\bar A(s, \bar\pi(s))}\indic{\abs{A^*(s, \bar\pi(s))-\bar A(s, \bar\pi(s))}> t}]\label{eq: fast rate Lp 2a}
\\\leq&~\magd{A^*(s, \bar\pi(s))-\bar A(s, \bar\pi(s))}_{p,d^{\bar\pi}}\pr_{s\sim d^{\bar\pi}}(\abs{A^*(s, \bar\pi(s))-\bar A(s, \bar\pi(s))}> t)^{1-p}\label{eq: fast rate Lp 2b}
\\\leq&~\magd{A^*(s, \bar\pi(s))-\bar A(s, \bar\pi(s))}_{p,d^{\bar\pi}}^pt^{1-p}.\label{eq: fast rate Lp 2c}
\end{align}
In \cref{eq: fast rate Lp 2a} we use the fact that $A^*(s,\bar\pi(s))\leq0$ while $\bar A(s,\bar\pi(s))\geq0$. In \cref{eq: fast rate Lp 2b} we use H\"older's inequality. And, in \cref{eq: fast rate Lp 2c} we use Markov's inequality.

Choosing $t=\magd{A^*(s, \bar\pi(s))-\bar A(s, \bar\pi(s))}_{p,d^{\bar\pi}}^{p/(p+\alpha)}\delta_0^{\alpha/(p+\alpha)}$, we obtain that
$$
V^*-V^{\bar\pi}\leq\frac2{1-\gamma}\delta_0^{(1-p)\alpha/(p+\alpha)}\magd{A^*(s, \bar\pi(s))-\bar A(s, \bar\pi(s))}_{p,d^{\bar\pi}}^{p(1+\alpha)/(p+\alpha)}.
$$
We conclude by noting that
$$
\magd{A^*(s, \bar\pi(s))-\bar A(s, \bar\pi(s))}_{p,d^{\bar\pi}}
\leq \magd{Q^*-\bar f}_{p,d^{\bar\pi}\times\bar\pi}+\magd{Q^*-\bar f}_{p,d^{\bar\pi}\times\pi^*}.
$$

Now consider $p=\infty$. Since $A^*(s,\bar\pi(s))\leq0$ while $\bar A(s,\bar\pi(s))\geq0$, we have that
\begin{align*}
\expect_{s\sim d^{\bar\pi}} &\abs{A^*(s, \bar\pi(s))}=
\expect_{s\sim d^{\bar\pi}} \abs{A^*(s, \bar\pi(s))}\indic{0<\abs{A^*(s, \bar\pi(s))}\leq\abs{A^*(s, \bar\pi(s))-\bar A(s, \bar\pi(s))},\,\bar\pi(s)\notin\mathcal A^*}
\\&\leq\expect_{s\sim d^{\bar\pi}} \abs{A^*(s, \bar\pi(s))-\bar A(s, \bar\pi(s))}\indic{0<\abs{A^*(s, \bar\pi(s))}\leq\abs{A^*(s, \bar\pi(s))-\bar A(s, \bar\pi(s))},\,\bar\pi(s)\notin\mathcal A^*}
\\&\leq \magd{A^*(s, \bar\pi(s))-\bar A(s, \bar\pi(s))}_\infty\pr_{s\sim d^{\bar\pi}}\prns{0<\abs{A^*(s, \bar\pi(s))}\leq\abs{A^*(s, \bar\pi(s))-\bar A(s, \bar\pi(s))},\,\bar\pi(s)\notin\mathcal A^*}
\\&\leq \magd{A^*(s, \bar\pi(s))-\bar A(s, \bar\pi(s))}_\infty\pr_{s\sim d^{\bar\pi}}\prns{0<\Delta(s)\leq\magd{A^*(s, \bar\pi(s))-\bar A(s, \bar\pi(s))}_\infty}
\\&\leq \magd{A^*(s, \bar\pi(s))-\bar A(s, \bar\pi(s))}_\infty^{1+\alpha}/\delta_0^\alpha.
\end{align*}
We conclude by noting that
$$
\magd{A^*(s, \bar\pi(s))-\bar A(s, \bar\pi(s))}_{\infty}
\leq 2\magd{Q^*-\bar f}_{\infty}.
$$
\end{myproof}

\begin{myproof}[Proof of \cref{lem:FQI}]
We begin by reusing the arguments in the proof of lemma 4.4 of \citet{agarwal2019reinforcement}. First, suppose 
\begin{align*}
    \|\hat f_{k+1}- \Tcal \hat f_k \|_{2,\mu_b} \leq \epsilon 
\end{align*}
for any $k=1,\cdots,K-1$.

Let $\beta(s,a;\pi_1,\cdots,\pi_t)$ be the state-action distribution after executing $\pi_1,\cdots,\pi_t$ after starting with an initial state-action distribution $\beta$. Then, from the proof of lemma 4.4 in \citet{agarwal2019reinforcement}, for any $\beta$, we have  
\begin{align*}
    \|Q^*-\hat f_k\|_{2,\beta} \leq  \gamma \|Q^*-\hat f_{k-1}\|_{2,\beta(\cdot;\pi)}  +   \|Q^*-\hat f_k\|_{2,\beta} 
\end{align*}
for a certain choice of $\pi$. 
Therefore, given any policy $\pi\in\Pi$ and arbitrary policy $\pi'$, defining the state-action distribution $\beta=d^\pi\times\pi'$, we have that for some certain choices of policies $\pi_1,\cdots,\pi_K$, we can obtain the following recursion:
\begin{align*}
     &\|Q^*-\hat f_k\|_{2,\beta}\\
     &\leq \gamma \|Q^*-\hat f_{k-1}\|_{2,\beta(\cdot;\pi_1)}  +   \|Q^*-\hat f_k\|_{2,\beta} \\
     &\leq   \gamma \|Q^*-\hat f_{k-1}\|_{2,\beta(\cdot;\pi_1)}  +   \sqrt{C_{\mathrm{all}}(\Pi)} \|Q^*-\hat f_k\|_{2,\mu_b} \\ 
      &\leq  \gamma\{ \gamma \|Q^*-\hat f_{k-2}\|_{2,\beta(\cdot;\pi_1,\pi_2)} +  \|Q^*-\hat f_k\|_{2,\beta(\cdot;\pi_1) } \}  +   \sqrt{ C_{\mathrm{all}}(\Pi)} \|Q^*-\hat f_k\|_{2,\mu_b} \\
    & \leq  \gamma\{ \gamma \|Q^*-\hat f_{k-2}\|_{2,\beta(\cdot;\pi_1,\pi_2)} + \sqrt{C_{\mathrm{all}}(\Pi) } \|Q^*-\hat f_k\|_{2,\mu_b} \}  +   \sqrt{ C_{\mathrm{all}}(\Pi)} \|Q^*-\hat f_k\|_{2,\mu_b}\\
    & \ldots \\
    & \leq \sqrt{ C_{\mathrm{all}}(\Pi)}\frac{\epsilon}{1-\gamma} + 2\gamma^K\|\Fcal\|_{\infty}. 
\end{align*}

Lastly, from \citet[Theorem 14.20]{wainwright2019high} and noting that $|r+\gamma\max_{a'}\hat f_k(s',a')|\leq 2M$, we obtain that with probability $1-\delta$, for any $k \in [1,\cdots,K-1]$,
\begin{align*}
    \|\hat f_{k+1} - \Tcal \hat f_k\|_{2,\mu_b}\leq c_1M(\rho_n + \sqrt{\log(K/\delta)/n}). 
\end{align*}
Then, by replacing $\epsilon$ with the above error bound, the statement is immediately concluded.
\end{myproof}

\begin{myproof}[Proof of \cref{thm: fast rate Lp2_single}]
By the performance difference lemma (\citet[Lemma 1.16]{agarwal2019reinforcement}), we have for any $t>0$ that
\begin{align}
    &(1-\gamma)^2 R_{\max}^{-1}\{V^{*}-V^{\bar \pi }\} \notag\\
    &\leq \mathbb{E}_{s \sim d_{\pi^{\star}}}[\indic{\pi^{\star}(s)\neq \bar \pi (s)} ] \notag \\
    &\leq\mathbb{E}_{s \sim d_{\pi^{\star}}}\bracks{\sum_{a'\in \mathcal{A}} \indic{\bar f(s,a')-\bar f(s,\pi^{\star}(s))\geq 0,\,Q^{\star}(s,a')-Q^{\star}(s,\pi^{\star}(s))\leq 0 }  } \notag\\
    &\leq \mathbb{E}_{s \sim d_{\pi^{\star}}}\bracks{\sum_{a'\in \mathcal{A}} \indic{Q^{\star}(s,a')-Q^{\star}(s,\pi^{\star}(s))\geq -t }  } \label{eq: fast rate Lp2_single 1}\\
     &\phantom{\leq} + \mathbb{E}_{s \sim d_{\pi^{\star}}}\bracks{\sum_{a'\in \mathcal{A}} \indic{\bar f(s,a')-\bar f(s,\pi^{\star}(s))-Q^{\star}(s,a')+Q^{\star}(s,\pi^{\star}(s))\geq t }  }\label{eq: fast rate Lp2_single 2}
\end{align}
To bound \cref{eq: fast rate Lp2_single 1} we use the margin assumption:
\begin{align*}
    \mathbb{E}_{s \sim d_{\pi^{\star}}}\bracks{\sum_{a'\in \mathcal{A}} \indic{Q^{\star}(s,a')-Q^{\star}(s,\pi^{\star}(s))\geq -t }  }\leq |\Acal|(t/\delta_0)^{\alpha}. 
\end{align*}
Next, we bound \cref{eq: fast rate Lp2_single 2}:
\begin{align*}
    &t^p \mathbb{E}_{s \sim d_{\pi^{\star}}}\bracks{\sum_{a'\in \mathcal{A}}  \indic{\bar f(s,a')-\bar f(s,\pi^{\star}(s))-Q^{\star}(s,a')+Q^{\star}(s,\pi^{\star}(s))\geq t }  } \\
    &\leq  \mathbb{E}_{s \sim d_{\pi^{\star}}}\bracks{\sum_{a'\in \mathcal{A}} |\bar f(s,a')-\bar f(s,\pi^{\star}(s))-Q^{\star}(s,a')+Q^{\star}(s,\pi^{\star}(s))|^p } \tag{Markov inequality} \\
  &\leq 2^p \mathbb{E}_{s \sim d_{\pi^{\star}}}\bracks{\sum_{a'\in \mathcal{A}} |\bar f(s,a')-Q^{\star}(s,a')|^p + |\bar f(s,\pi^{\star}(s))-Q^{\star}(s,\pi^{\star}(s))|^p }\\
    &\leq 2^p|\Acal| \mathbb{E}_{s \sim d_{\pi^{\star}},a'\sim\pi^\mathrm{Uni}}\bracks{ |\bar f(s,a')-Q^{\star}(s,a')|^p + |\bar f(s,\pi^{\star}(s))-Q^{\star}(s,\pi^{\star}(s))|^p }.  
\end{align*}
Therefore, choosing $t$ to equate the bound on the two terms,
\begin{align*}
&(1-\gamma)^2 M^{-1}\{V^{*}-V^{\bar \pi }\}\\
&\leq |\Acal|(t/\delta_0)^{\alpha} +t^{-p} 2^p|\Acal| \mathbb{E}_{s \sim d_{\pi^{\star}},a'\sim \pi^\mathrm{Uni}}\bracks{ |\hat q(s,a')-Q^{\star}(s,a')|^p + |\hat q(s,\pi^{\star}(s))-Q^{\star}(s,\pi^{\star}(s))|^p }  \\
&\leq 2|\Acal| 2^{\frac{p \alpha}{\alpha + p}} \delta_0^{ \frac{p \alpha}{\alpha + p}} ( \mathbb{E}_{s \sim d_{\pi^{\star}},a'\sim\pi^\mathrm{Uni}}\bracks{ |\hat q(s,a')-Q^{\star}(s,a')|^p + |\hat q(s,\pi^{\star}(s))-Q^{\star}(s,\pi^{\star}(s))|^p } )^{\frac{\alpha}{\alpha + p} }  \\ 
&\leq 2|\Acal| 2^{\frac{p \alpha}{\alpha + p}} \delta_0^{ \frac{p \alpha}{\alpha + p}}  C_{\mathrm{sin}}^{\frac{\alpha}{\alpha +p}} \|Q^{\star}-\bar f \|^{\frac{p\alpha}{(\alpha +p)}}_{p}. 
\end{align*}
\end{myproof}
}

\section*{Acknowledgments.}
Th authors thank Yoav Kallus for providing the geometric argument to improve the dependence of the $\delta_0$ constant on $\abs{\A}$ in the proof of \cref{lemma:linearmargin}, which is based on a proof by Noam Elkies of the monotonicity of perimeter for convex sets under containment (\url{https://mathoverflow.net/questions/71502/circumference-of-convex-shapes/71505#71505}).
The authors acknowledge the helpful and constructive comments by the anonymous reviewers from COLT 2021, where an extended abstract of this work was presented.
This material is based upon work supported by the National Science Foundation under Grant No. 1846210.

\bibliographystyle{plainnat}

\bibliography{literature}

\end{document}